\newcommand{\R}[0]{\mathbb{R}}
\newcommand{\bx}[0]{\bm{x}}
\newcommand{\pvert}[0]{\;\vert\;}
\DeclareMathOperator{\argmin}{arg\,min}
\DeclareMathOperator{\argmax}{arg\,max}
\newtheorem{proposition}{Proposition}[section]
\newtheorem{theorem}{Theorem}[section]
\newtheorem{lemma}[theorem]{Lemma}
\begin{document}
\twocolumn[

\aistatstitle{Scalable Bayesian Transformed Gaussian Processes}

\aistatsauthor{ Xinran Zhu \And Leo Huang \And  Cameron Ibrahim \And Eric Hans Lee \And David Bindel}

\aistatsaddress{ Cornell University \And Cornell University  \And University of Delaware \And SigOpt \And Cornell University} 
]

\begin{abstract}
The Bayesian transformed Gaussian process (BTG) model, proposed by Kedem and Oliviera, is a fully Bayesian counterpart to the warped Gaussian process (WGP) and marginalizes out a joint prior over input warping and kernel hyperparameters. 
This fully Bayesian treatment of hyperparameters often provides more accurate regression estimates and superior uncertainty propagation, but is prohibitively expensive. 
The BTG posterior predictive distribution, itself estimated through high-dimensional integration, must be inverted in order to perform model prediction. 
To make the Bayesian approach practical and comparable in speed to maximum-likelihood estimation (MLE), we propose principled and fast techniques for computing with BTG. Our framework uses doubly sparse quadrature rules, tight quantile bounds, and rank-one matrix algebra to enable both fast model prediction and model selection. 
These scalable methods allow us to regress over higher-dimensional datasets and apply BTG with layered transformations that greatly improve its expressibility. We demonstrate that BTG achieves superior empirical performance over MLE-based models. 
\end{abstract}

\section{Introduction \label{sec:intro}}
Gaussian processes (GPs) provide a powerful probabilistic learning framework, including a marginal
likelihood which represents the probability of data given only GP hyperparameters. The marginal
likelihood automatically balances model fit and complexity terms to favor the simplest models that
explain the data. 

A GP assumes normally distributed observations. In practice, however, this condition is not always adequately met. 
The classic approach to moderate departures from normality is trans-Gaussian kriging, which applies a normalizing nonlinear transformation to the data~\citep{Cressie93}. This idea was reprised and expanded upon in the machine learning literature. One instance is the warped GP (WGP), which maps the observation space to a latent space in which the data is well-modeled by a GP and which learns GP hyperparameters through maximum likelihood estimation \citep{WGP}. The WGP paper employs a class of parametrized, hyperbolic tangent transformations. Later, \citet{CWGP} introduced compositionally warped GPs (CWGP), which chain together a sequence of parametric transformations with closed form inverses. Bayesian warped GPs further generalize WGPs by modelling the transformation as a GP \citep{BWGP}. These are in turn generalized to Deep GPs by \cite{DGP}, which stack GPs in the layers of a neural network.

Throughout this line of work, the GP transformation and kernel hyperparameters are typically learned through joint maximum likelihood estimation (MLE). A known drawback of MLE is overconfidence in the data-sparse or low-data regime, which may be exacerbated by warping \citep{chai2019improving}. Bayesian approaches, on the other hand, offer a way to account for uncertainty in values of model parameters. 

Bayesian trans-kriging \citep{spock2009bayesian} treats both transformation and kernel parameters in a Bayesian fashion. A prototypical Bayesian trans-kriging model is the BTG model developed by \citet{BTG}. The model places an uninformative prior on the precision hyperparameter and analytically marginalizes it out to obtain a posterior distribution that is a mixture of Student's t-distributions. Then, it uses a numerical integration scheme to marginalize out transformation and remaining kernel parameters. In this latter regard, BTG is consistent with other Bayesian methods in the literature, including those of \cite{gibbs1998bayesian, adams2009tractable, lalchand2020approximate}. 
While BTG shows improved prediction accuracy and better uncertainty propagation, it comes with several computational challenges, which hinder its scalability and limit its competitiveness with the MLE approach.

First, the cost of numerical integration in BTG scales with the dimension of hyperparameter space, which can be large when transforms and noise model parameters are incorporated. Traditional methods such as Monte Carlo (MC) suffer from slow convergence. As such, we leverage sparse grid quadrature and quasi Monte Carlo (QMC), which have a higher degree of precision but require a sufficiently smooth integrand. Second, the posterior mean of BTG is not guaranteed to exist, hence the need to use the posterior \textit{median} predictor. The posterior median and credible intervals do not generally have closed forms, so one must resort to expensive numerical root-finding to compute them. Finally, while fast cross-validation schemes are known for vanilla GP models, leave-one-out-cross-validation (LOOCV) on BTG, which incurs quartic cost naively, is less straightforward to perform because of an embedded generalized least squares problem. 

In this paper, we reduce the overall computational cost of end-to-end BTG inference, including model prediction and selection. Our main contributions follow.
\begin{itemize}
    \item We propose efficient and scalable methods for computing BTG predictive medians and quantiles through a combination of doubly sparse quadrature and quantile bounds. We also propose fast LOOCV using rank-one matrix algebra.
    \item We develop a framework to control the tradeoff between speed and accuracy for BTG and analyze the error in sparsifying QMC and sparse grid quadrature rules.
    \item We empirically compare the Bayesian and MLE approaches and provide experimental results for BTG and WGP coupled with 1-layer and 2-layer transformations. We find evidence that BTG is well-suited for low-data regimes, where hyperparameters are under-specified by the data.
    \item We develop a modular Julia package for computing with transformed GPs (e.g., BTG and WGP) which exploits vectorized linear algebra operations and  supports MLE and Bayesian inference.
\end{itemize}

\section{Background \label{sec:background}}
\subsection{Gaussian Process Regression} 
A GP $f \sim \mathcal{GP}(\mu,\tau^{-1} k)$ is a distribution over functions in $\mathbb{R}^d$, where $\mu( x)$ is the expected value of $f( x)$ and $\tau^{-1} k( x,  x')$ is the positive (semi)-definite covariance between $f( x)$ and $f( x')$. 
For later clarity, we separate the precision hyperparameter $\tau$ from lengthscales and other kernel hyperparameters (typically denoted by $\theta$).

Unless otherwise specified, we assume a linear mean field and the squared exponential kernel: 
\begin{align*}
    \mu_\beta(\bx) &= {\mathbf \beta}^T m(\bx), \qquad m\colon \R^d \to \R^p, \\
    k_\theta(\bx, \mathbf x') &= \exp\Big(-\frac{1}{2}\lVert \bx - \mathbf x' \rVert^2_{\mathbf{D}^{-2}_\theta}\Big).
\end{align*}
Here $m$ is a known function mapping a location to a vector of covariates, $\beta$ consists of coefficients in the linear combination, and ${D}^{2}_\theta$ is a diagonal matrix of length scales determined by the parameter(s) $\theta$.

For any finite set of input locations, let:
\begin{align*}
    X &= \lbrack \bx_1,\ldots,\bx_n\rbrack^T & X &\in \R^{n\times d},\\
    {M}_X &= \lbrack m(\bx_1), \ldots, m(\bx_n) \rbrack^T & {M}_X &\in \R^{n\times p},\\
    {f}_X &= \lbrack f(\bx_1),\ldots, f(\bx_n)\rbrack^T & {f}_X &\in \R^{n},
\end{align*}
where $X$ is the matrix of observations locations, $M_X$ is the matrix of covariates at $X$, and ${f}_X$ is the vector of observations. A GP has the property that any finite number of evaluations of $f$ will have a joint Gaussian distribution:
${f}_X\pvert { \beta}, \tau, \theta \sim \mathcal{N}({M}_X{ \beta}, \tau^{-1}{K}_{X})$, where $(\tau^{-1} {K}_{X})_{ij} = \tau^{-1} k_\theta(\bx_i, \bx_j)$ is the covariance matrix of ${f}_X$.  We assume ${M}_X$ to be full rank.

The posterior predictive density of a point $\bx$ is:
\vspace{-3pt}
\begin{align*}
    & f(\bx) \pvert { \beta}, \tau, \theta, {f}_X \sim \mathcal{N}(\mu_{\theta,\beta}, s_{\theta,\beta}), \\
    &\mu_{\theta, { \beta}} = { \beta}^Tm(\bx) + {K}_{X\bx}^T{K}_{X}^{-1}({f}_X - {M}_X{{ \beta}}),\\
    &s_{\theta, \beta} = \tau^{-1} \big(k_\theta(\bx,  \bx) - {K}^T_{X\bx}{K}^{-1}_{X}{K}_{X\bx}\big),
\end{align*}
where $({K}_{X\bx})_i = k_\theta(\bx_i, \bx)$. Typically, $\tau$, $\beta$, and $\theta$ are fit by minimizing the negative log likelihood:
\vspace{-3pt}
\begin{align*}
-\log \mathcal{L}({f}_X \pvert {X}, { \beta}, \tau, \theta) &\propto \\ 
\frac{1}{2}\big\lVert {f}_X - {M}_X & { \beta} \big\rVert^2_{{K}_{X}^{-1}} + \frac{1}{2}\log \lvert {K}_{X} \rvert.
\end{align*}
This is known as maximum likelihood estimation (MLE) of the kernel hyperparameters. In order to improve the clarity of later sections, we modified the standard GP treatment of \cite{GPbook}; notational differences aside, our formulations are equivalent.

\subsection{Warped Gaussian Processes}

While GPs are powerful tools for modeling nonlinear functions, they make the fairly strong assumption of Gaussianity and homoscedasticity. WGPs \citep{WGP} address this problem by warping the observation space to a latent space, which itself is modeled by a GP. Given a strictly increasing, differentiable parametric transformation $g_\lambda$, WGPs model the composite function $g_\lambda\circ f$ with a GP:
\[(g_\lambda\circ f)\pvert { \beta}, \tau, \lambda, \theta \sim \mathcal{GP}(\mu_{ \beta}, \tau^{-1} k_\theta).\]

Let $(g_\lambda({f}_X))_i = g_\lambda(f(\bx_i))$. WGP jointly computes the parameters through MLE in the latent space, where the negative log likelihood is:
\begin{align*}
-\log \mathcal{L}\big(g_\lambda({f}_X) \pvert {X}, { \beta}, \tau, \theta, \lambda\big) &\propto \\ 
\frac{1}{2}\big\| g_\lambda({f}_X) - {M}_X { \beta} \big\|^2_{{K}_{X}^{-1}} &+ \frac{1}{2}\log \lvert {K}_{X} \rvert - \log J_\lambda,
\end{align*}
and where $J$ represents the transformation Jacobian:
\[ 
J_\lambda = \left\lvert \prod_{i=1}^n \frac{\partial}{\partial f(\bx_i)} g_\lambda(f(\bx_i))\right\rvert. 
\]
WGPs predict the value of a point $ x$ by computing its posterior mean in the latent space and then inverting the transformation back to the observation space: $g_\lambda^{-1}(\hat{\mu}( x))$.
\citet{WGP} uses the tanh transform family, whose members do not generally have closed form inverses; they must be computed numerically.

\subsection{Bayesian Transformed GPs (BTG) \label{ssec:btg}}
\begin{figure}[!t]
    \centering
    \begin{subfigure}[IntSine]{0.45\textwidth}
        \includegraphics[width=\textwidth]{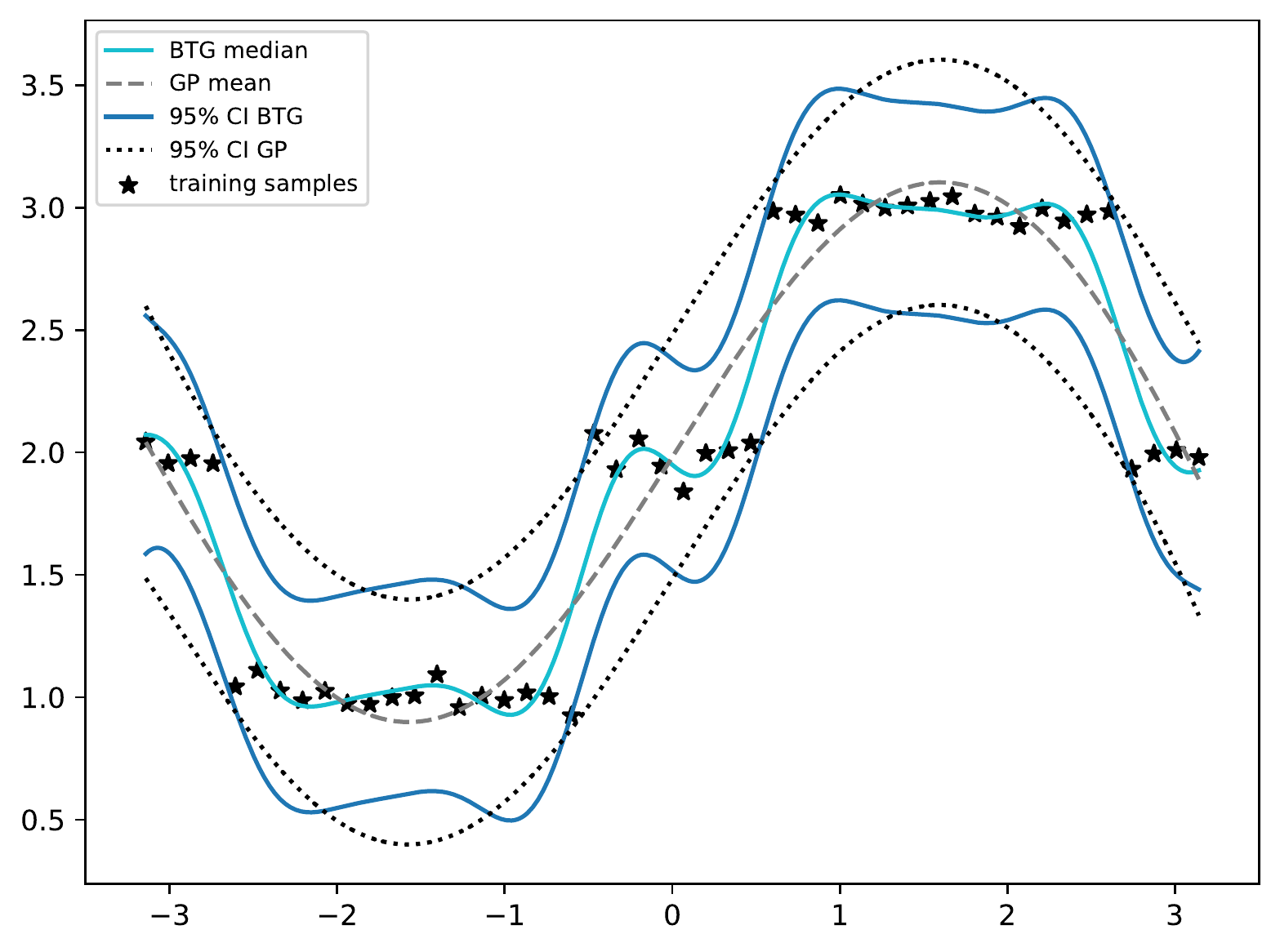}
    \end{subfigure}
    \vspace{-10pt}
    \caption{\footnotesize{A comparison of GP and BTG: predictive mean/median and 95\% equal tailed credible interval. Trained on 48 random samples from the rounded sine function with noise from $\mathcal{N}(0, 0.05)$.}\label{figure: compare btg and gp}}
    \vspace{-10pt}
\end{figure}
One might think of the Bayesian Transformed Gaussian (BTG) model \citep{BTG} as a fully Bayesian generalization of WGP. BTG uses Bayesian model selection and marginalizes out priors over \textit{all} model parameters: transformation parameters $\lambda$, mean vector $\beta$,  signal variance $\tau$, and lengthscales $\theta$. 
Just like WGP, BTG models a function $f(\bx)$ as:
\[(g_\lambda\circ f)\pvert { \beta}, \tau, \lambda, \theta \sim \mathcal{GP}(\mu_\beta, \tau^{-1} k_\theta). \]
BTG was originally a Bayesian generalization of trans-kriging models. Because appropriate values for $\beta$, $\tau$, and $\theta$ depend nontrivially on $\lambda$, BTG adopts the improper joint prior:
\[
p({ \beta}, \tau, \theta, \lambda) \propto  p(\theta)p(\lambda) \;/\; (\tau J_\lambda^{p/n}). 
\]
As it turns out, BTG's posterior predictive distribution can be approximated as a mixture of t-distributions: 
\begin{align*}
p(f(\bx) \pvert f_X) = \sum_{i=1}^M w_i p \big(g_{\lambda_i}(f(\bx)) \pvert \theta_i, \lambda_i, f_X\big),
\end{align*}
where here $p$ is the t-distribution pdf. We provide a condensed derivation in \S \ref{sec:preddensitytreatment} and \ref{sec:methods}; for a comprehensive analysis, see \cite{boxcoxanalysis}. This predictive distribution must then be inverted to perform prediction or uncertainty quantification. 

Figure \ref{figure: compare btg and gp} demonstrates the advantage of fully Bayesian model selection. BTG resolves the underlying datapoints much better than a GP. In later sections, we explore the advantages of being Bayesian in the low-data regime. 

\subsection{The Predictive Density}\label{sec:preddensitytreatment}
A key idea of the BTG model is that, conditioned on $\lambda$, $\theta$, and ${f}_X$, the resulting WGP is a generalized linear model \citep{BTG}. We estimate ${ \beta}$ by $\hat{ \beta}_{\theta,\lambda}$, the solution to the weighted least squares problem:
\begin{equation*}
q_{\theta,\lambda} = \min_\beta \big\| g_\lambda({f}_X) - {M}_X{ \beta}\big\|^2_{{K}^{-1}_{X}}, 
\end{equation*}
where $q_{\theta, \lambda}$ is the residual norm. BTG then adopts a conditional normal-inverse-gamma posterior on $(\beta, \tau)$:
\begin{equation*}
\begin{split}
    { \beta} \pvert \tau, \lambda, \theta, {f}_X &\sim \mathcal{N}\big(\hat\beta_{\lambda,\theta}, \tau^{-1} ({M}_X^T{K}_{ X}^{-1}{M}_X)^{-1}\big), \\
    \tau \pvert \lambda,\theta, {f}_X &\sim \text{Ga}\Big(\frac{n - p}{2}, \frac{2}{q_{\lambda,\theta}}\Big).
\end{split}
\end{equation*}
At a point $\bx$, the marginal predictive density of $g_\lambda(f(\bx))$ is then given by the following $t$-distribution:
\begin{align}
\label{eqn:tdist}
g_\lambda(f(\bx)) \pvert \lambda, \theta, {f}_X \sim T_{n - p}\big(m_{\lambda, \theta}, (q_{\theta,\lambda}{C}_{\theta,\lambda})^{-1}\big), \end{align}
where the mean largely resembles that of a GP:
\begin{equation*}
m_{\lambda,\theta} =   
{K}_{\bx X}{K}_{X}^{-1}\big(g_\lambda({f}_X) - {M}_X\hat\beta_{\lambda,\theta}\big) + \hat\beta_{\lambda,\theta}^Tm(\bx),
\end{equation*} 
and
${C}_{\lambda,\theta}$ is the final Schur complement $B(\bx)/[k_\theta(\bx, \bx)]$ of the bordered matrix: 
\begin{align*} 
B(\bx) = \begin{bmatrix}
 0 & {M}_X^T & {m}(\bx)\\
 {M}_X & {K}_{X} & {K}_{X\bx}\\
 {m}(\bx)^T & {K}_{ X\bx}^T & k_\theta(\bx, \bx)
\end{bmatrix}.
\end{align*}

By Bayes' theorem, the marginal posterior of BTG is: 
\vspace{-5pt}
\begin{equation}\label{eq:BTG marginal posterior}
\begin{split}
&p(f(\bx) \pvert {f}_X) = \\
&\frac{\int_{\Theta, \Lambda} p(g_\lambda(f(\bx)) \pvert \theta, \lambda, {f}_X)p({f}_X \pvert \lambda, \theta )p(\theta)p(\lambda)d\lambda d\theta}{\int_{\Theta, \Lambda} p(f_X |\theta, \lambda)p(\theta)p(\lambda)\,d\lambda\,d\theta}.
\vspace{-15pt}
\end{split}
\end{equation}
Unlike WGP, BTG may not have first or second moments, because its marginal posterior may be for example, a mixture of log-t distributions. If this occurs, the probability density function (pdf) will not have a mean or variance. Therefore, BTG instead uses the median and credible intervals, computed by inverting its cumulative distribution function (cdf).

\section{Methodology}\label{sec:methods}
For general nonlinear transformations, the posterior distribution of BTG (Equation \ref{eq:BTG marginal posterior}) is intractable and therefore we approximate it using a set of $M$ quadrature nodes and weights $([\theta_i, \lambda_i], w_i)$, yielding the mixture of distributions 
\begin{align*}
\begin{split}
&p\big(f(\mathbf x) \pvert f_X \big) \approx\\  &\frac{\sum_{i=1}^M w_i p\big(g_{\lambda_i}(f(\bx)) \pvert \theta_i, \lambda_i, f_X \big)p(f_X | \theta_i, \lambda_i )p(\theta_i)p(\lambda_i)}{\sum_{i=1}^M w_i p(f_X | \theta_i, \lambda_i)p(\theta_i)p(\lambda_i)}.
\end{split}
\end{align*}

In this equation, $p\left(g_{\lambda_i}(f(\bx)) \pvert \theta_i, \lambda_i, f_X \right)$ is the pdf of the $t$-distribution $T_{n-p}(\mu_{\theta_i, \lambda_i}, (q_{\theta_i,\lambda_i}C_{\theta_i,\lambda_i})^{-1})$, $p(f_X | \theta_i, \lambda_i)$ is the likelihood of data given hyperparameters, $p(\theta_i)$ and $p(\lambda_i)$ are our hyperparameter priors, and  $w_i$ is a quadrature weight. 

To simplify notation, we combine all terms except for the t-distribution pdf into the weights $\{\tilde{w}_i\}_{i=1}^M$, where 
\[ \tilde{w}_i := \frac{w_i p(f_X|\theta_i, \lambda_i)p(\theta_i)p(\lambda_i)}{\sum_{i=1}^M w_i p(f_X|\theta_i, \lambda_i) p(\theta_i)p(\lambda_i)}.
\]
Combining constants simplifies the BTG predictive distribution into a general mixture of t-distributions:\vspace{-5pt}
\begin{align}
\label{eq: BTG t-mixture posterior}
p\big(f(\bx) \pvert f_X\big) = \sum_{i=1}^M  \tilde{w}_i p \big(g_{\lambda_i}(f(\bx)) \pvert \theta_i, \lambda_i, f_X\big).
\vspace{-5pt}
\end{align}
As mentioned earlier,  $p\big(f(\bx) \pvert f_X\big)$ is not guaranteed to have a mean, so we must use the median predictor instead. We do so by computing the quantile $P^{-1}(0.5)$ by numerical root-finding, where $P$ is the cdf of $p\big(f(\bx) \pvert f_X\big)$, and therefore a mixture of t-distribution cdfs. 

BTG regression via the median predictor (or any other quantile) of Equation \ref{eq: BTG t-mixture posterior} is challenging. The dimensionality of the integral scales with hyperparameter dimension, which grows not only with the ambient dimension of the data, but also with the number of transformations used. Furthermore, its cdf must be numerically inverted, requiring many such quadrature computations for a single, point-wise regression task. This is further complicated by the difficulty in assessing model fit through LOOCV, which must be repeated at every quadrature node as well. As a result, a naive implementation of BTG scales poorly.

In this section, we discuss scalable algorithms that make BTG model prediction and model validation far faster, and indeed, comparable to the speed of its MLE counterparts. 
First, we discuss our doubly sparse quadrature rules for computing the BTG predictive distribution (\S \ref{sec:sparsegridquad} and \S \ref{sec:quadsparse}). We then provide quantile bounds that accelerate root-finding convergence (\S \ref{sec:quantilbounds}). Next, we propose a $\mathcal{O}(n^3)$ LOOCV algorithm for BTG using Cholesky downdates and rank-1 matrix algebra (\S \ref{sec:fastloocv}). Finally we discuss the single and multi-layer nonlinear transformations used in our experiments (\S \ref{sec:transformations}). 
\subsection{Sparse Grid Quadrature}\label{sec:sparsegridquad}

Sparse grid methods, or Smolyak algorithms, are effective for approximating integrals of sufficient regularity in moderate to high dimensions. While the conventional Monte Carlo (MC) quadrature approach used by \citet{BTG} converges at the rate of $\mathcal{O}(1/\sqrt{M})$, where $M$ is the number of quadrature nodes, the approximation error of Smolyak's quadrature rule is $\mathcal{O}\Big(M^{-r}\lvert \log_2 M\rvert^{(d - 1)(r + 1)}\Big)$ where $d$ is the dimensionality of the integral and $r$ is the integrand's regularity i.e., number of derivatives. 

In this paper, we use a sparse grid rule detailed by \citet{bungartz2004sparse} and used for likelihood approximation by \citet{sparsegrid}.

\subsection{Quadrature Sparsification}\label{sec:quadsparse}
Numerical integration schemes such as sparse grids and QMC use $M$ fixed quadrature nodes, where $M$ depends on the dimensionality of the domain and fineness of the grid. In the Bayesian approach, expensive GP operations such as computing a log determinant and solving a linear system are repeated across quadrature nodes, for a total time complexity of $\mathcal{O}(Mn^3)$. 

In practice, many nodes are associated with negligible mixture weights, so their contribution to the posterior predictive distribution can effectively be ignored. We thus adaptively drop nodes when their associated weights fall below a certain threshold. To do so in a principled way, we approximate the mixture with a subset of dominant weights and then quantify the error in terms of the total mass of discarded weights.

We assume the posterior cdf is the  mixture of cdfs $F(x) = \sum_{i=1}^M w_if_i(x)$, where each $f_i$ is a cdf. Assume the weights $\{w_i\}_{i=1}^M$ are ordered by decreasing magnitude. Consider $F_k(x)$, a truncated and re-scaled $F(x)$.

We first quantify the pointwise approximation error in Lemma \ref{lemma: pointwise error bound}. We then quantify the error in quantile computation: Propositions \ref{qb:pos}, \ref{qb:neg} show that the approximated quantile $F_k^{-1}(p)$ can be bounded by perturbed true quantiles. Proposition \ref{prop: error bound at p} gives a simple bound between $F_k^{-1}(p)$ and $F^{-1}(p)$ within the region of interest, and applies to both QMC---which uses positive weights---and sparse grid quadrature---which uses positive \textit{and} negative weights.

\begin{lemma}
\label{lemma: pointwise error bound}
Let $k$ be the smallest integer such that $\sum_{i=1}^k w_i \geq 1-\epsilon$. Then define the scaled, truncated mixture
 \vspace{-5pt}
\[F_k(x) := \frac{1}{c}\sum_{i=1}^k w_if_i(x),\quad  c := \sum_{i=1}^k w_i.\vspace{-6pt}\]
We have
\[|F(x)-F_k(x)| \le 2\epsilon.\]
\end{lemma}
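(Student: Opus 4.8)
The plan is to bound the pointwise error by the total-variation route of splitting it into two conceptually distinct pieces: the probability mass discarded by truncation, and the distortion introduced when we renormalize the retained weights back to unit mass. Concretely, I would write the unscaled head $A(x) := \sum_{i=1}^{k} w_i f_i(x)$ and the discarded tail $B(x) := \sum_{i=k+1}^{M} w_i f_i(x)$, so that $F(x) = A(x) + B(x)$ and $F_k(x) = A(x)/c$. The triangle inequality then gives
\[
|F(x) - F_k(x)| \le |F(x) - A(x)| + |A(x) - F_k(x)| = |B(x)| + |A(x)|\,\Bigl|1 - \tfrac{1}{c}\Bigr|,
\]
which cleanly isolates the truncation error from the rescaling error. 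The whole argument is then just bounding these two terms separately, each by the discarded mass.

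First I would bound the tail. Since each $f_i$ is a cdf we have $0 \le f_i(x) \le 1$, and because the mixture weights are nonnegative and sum to one, $0 \le B(x) \le \sum_{i=k+1}^{M} w_i = 1 - c$. By the defining property of $k$ we have $c = \sum_{i=1}^{k} w_i \ge 1 - \epsilon$, so $1 - c \le \epsilon$ and hence $|B(x)| \le \epsilon$. Second, for the renormalization term I would again use $0 \le f_i \le 1$ to get $0 \le A(x) \le c$; since $c \le 1$ the factor is $|1 - 1/c| = (1 - c)/c$, so that $|A(x)|\,|1 - 1/c| \le c \cdot (1-c)/c = 1 - c \le \epsilon$. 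Adding the two estimates yields $|F(x) - F_k(x)| \le 2\epsilon$.

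The step I would watch most carefully is the second bound, since the rescaling factor $1/c$ exceeds one — we are inflating the retained weights — so one must keep the sign of $1 - 1/c$ straight and lean on the upper bound $A(x) \le c$, which itself uses both $f_i \le 1$ and nonnegativity of the head weights. This is the only place where the genuine-mixture (nonnegative, unit-sum) assumption is essential; if one admitted the negative weights of the sparse-grid rule, the same skeleton survives but $|A(x)| \le \sum_{i=1}^{k}|w_i|$ must replace $A(x) \le c$ and the signs tracked more carefully, which is presumably why the later propositions treat that case separately. I would also note, as a remark, that carrying the signs rather than the triangle inequality gives the sharper two-sided estimate $-\epsilon \le F(x) - F_k(x) \le \epsilon$; the stated $2\epsilon$ bound is weaker but is the cleanest consequence of the decomposition above and is all that the subsequent quantile arguments require.
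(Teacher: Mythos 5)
Your proof is correct and is essentially the paper's own argument: the paper also splits $F_k(x)-F(x)$ into the rescaling term $\left(\tfrac{1}{c}-1\right)\sum_{i=1}^{k}w_i f_i(x)$ and the discarded tail $\sum_{i=k+1}^{M}w_i f_i(x)$, applies the triangle inequality, and bounds each piece by $1-c\le\epsilon$ using $0\le f_i\le 1$. Your closing remark that tracking signs (the head term is nonpositive, the tail nonnegative) sharpens the bound to $|F(x)-F_k(x)|\le\epsilon$ is a valid observation that the paper does not make, though the $2\epsilon$ bound suffices for the subsequent propositions.
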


\begin{proposition}[Error Bound for Positive Weights]\label{qb:pos}

For any $\epsilon \in (0,1)$, let $k$ be the smallest integer such that $\sum_{i=1}^k w_i \geq 1-\epsilon$. Define the scaled, truncated mixture
\vspace{-10pt}
\[
F_k(x) := \frac{1}{c}\sum_{i=1}^k w_if_i(x),\quad  c := \sum_{i=1}^k w_i.
\vspace{-2pt}
\] 
Let $p\in (0, 1)$ and assume that $p\pm 2\epsilon\in (0, 1)$. Then the approximate quantile $ F_k^{-1}(p)$ is bounded by perturbed true quantiles:
\[F^{-1}(p-2\epsilon) \le F_k^{-1}(p) \le F^{-1}(p+2\epsilon).\]
\end{proposition}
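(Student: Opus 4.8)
The plan is to reduce the entire statement to the pointwise bound $|F(x)-F_k(x)|\le 2\epsilon$ already supplied by Lemma \ref{lemma: pointwise error bound}, combined with the elementary order properties of the generalized quantile function. I would work with the generalized inverse $F^{-1}(p):=\inf\{x:F(x)\ge p\}$ (and likewise $F_k^{-1}$). Since each $f_i$ is a genuine cdf and, in the positive-weight regime, the retained weights are positive and renormalized by $c$, both $F$ and $F_k$ are nondecreasing and right-continuous with range $(0,1)$. This lets me invoke the two standard facts for right-continuous distribution functions: (i) $F(F^{-1}(p))\ge p$, and (ii) the Galois-type equivalence $F^{-1}(p)\le x \iff p\le F(x)$, and their analogues for $F_k$.

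For the lower bound, I set $x^{\star}:=F_k^{-1}(p)$. Fact (i) applied to $F_k$ gives $F_k(x^{\star})\ge p$, and the Lemma gives $F(x^{\star})\ge F_k(x^{\star})-2\epsilon\ge p-2\epsilon$; applying equivalence (ii) to $F$ then yields $F^{-1}(p-2\epsilon)\le x^{\star}$, which is the left inequality. For the upper bound I run the same argument in reverse: put $y:=F^{-1}(p+2\epsilon)$, so fact (i) applied to $F$ gives $F(y)\ge p+2\epsilon$; the Lemma forces $F_k(y)\ge F(y)-2\epsilon\ge p$, and equivalence (ii) applied to $F_k$ gives $x^{\star}=F_k^{-1}(p)\le y=F^{-1}(p+2\epsilon)$. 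Chaining the two bounds proves the claim.

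The hypothesis $p\pm 2\epsilon\in(0,1)$ is exactly what I would use to guarantee that the perturbed quantiles $F^{-1}(p\pm 2\epsilon)$ and the target $F_k^{-1}(p)$ are well-defined and finite: because the mixture components are t-distributions supported on all of $\mathbb{R}$, $F$ and $F_k$ are strictly increasing with range $(0,1)$, so every argument in $(0,1)$ has a finite preimage. The one delicate point, and the step I would be most careful about, is the direction of the inequalities together with the right-continuity needed for fact (i): it is tempting to write $F(F^{-1}(p))=p$, but at a jump or flat spot only $F(F^{-1}(p))\ge p$ holds, and using equality would silently break the argument. Everything else is bookkeeping with monotonicity, and positivity of the weights enters only through the Lemma (which ensures $F_k$ is a bona fide cdf and the $2\epsilon$ estimate holds), so no additional estimate is required here.
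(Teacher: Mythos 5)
Your proposal is correct and takes essentially the same route as the paper's proof: both reduce the claim to the pointwise estimate $|F(x)-F_k(x)|\le 2\epsilon$ from Lemma \ref{lemma: pointwise error bound} and then transfer it through monotonicity of the quantile function, evaluating at $x^{\star}=F_k^{-1}(p)$. The only difference is one of rigor, not of idea: the paper simply writes $F_k(x^{\star})=p$ and applies $F^{-1}$ to the resulting sandwich (legitimate here because the mixture components are continuous, strictly increasing t-distribution cdfs), whereas you phrase the same argument via generalized inverses and the Galois equivalence, which makes it robust to jumps and flat spots.
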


\begin{proposition}[Error Bound for Negative Weights]\label{qb:neg}
Let $F(x)$ be defined as before, except each $w_i$ is no longer required to be positive. Consider the split $F(x) = F_{M'}(x)+R_{M'}(x)$, where $F_{M'}(x) = \sum_{i=1}^{M'}w_if_i(x)$ and $R_{M'}(x) = \sum_{i=M'+1}^M w_if_i(x)$. Then for any $x$, we have $R_{M'}(x)\in [\epsilon_{-}, \epsilon_+]$, where the epsilons are defined as the sum of positive (resp. negative) weights of $R_{M'}(x)$
\[
\epsilon_{-} = \sum_{i = M'+1}^M [w_i]_{-} \le 0 \;,\; \epsilon_{+} = \sum_{i = M'+1}^M [w_i]_{+} \ge 0.
\]
Let $p\in (0, 1)$ and assume $p + \epsilon_{-}, p + \epsilon_{+}\in (0, 1)$. Then the approximate quantile $ F_{M'}^{-1}(p)$ is bounded by perturbed true quantiles:
\[ 
F^{-1}(p + \epsilon_-) \le F_{M'}^{-1}(p) \le F^{-1}(p + \epsilon_{+}).  
\]
\end{proposition}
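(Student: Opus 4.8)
The plan is to reduce the statement to a single pointwise sandwich followed by a monotone inversion, exactly paralleling the positive-weight argument of Proposition~\ref{qb:pos}, but with the additive remainder $R_{M'}$ taking the role of the truncation error. First I would confirm the stated remainder bound $R_{M'}(x)\in[\epsilon_-,\epsilon_+]$. Since each $f_i$ is a cdf we have $f_i(x)\in[0,1]$, so a single term satisfies $w_i f_i(x)\in[0,w_i]$ when $w_i\ge 0$ and $w_i f_i(x)\in[w_i,0]$ when $w_i<0$; in both cases $[w_i]_-\le w_i f_i(x)\le [w_i]_+$. Summing over $i=M'+1,\dots,M$ yields $\epsilon_-\le R_{M'}(x)\le\epsilon_+$ for every $x$.

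Next I write $F(x)=F_{M'}(x)+R_{M'}(x)$ and use the remainder bound to obtain, for all $x$,
\[
F_{M'}(x)+\epsilon_-\le F(x)\le F_{M'}(x)+\epsilon_+ .
\]
I then evaluate this at the specific point $x^\star:=F_{M'}^{-1}(p)$, where $F_{M'}(x^\star)=p$, which gives $p+\epsilon_-\le F(x^\star)\le p+\epsilon_+$. The hypotheses $p+\epsilon_\pm\in(0,1)$ guarantee that the perturbed quantiles $F^{-1}(p+\epsilon_\pm)$ are well defined. Applying the nondecreasing map $F^{-1}$ to the two inequalities transfers the bounds to the argument: from $F(x^\star)\ge p+\epsilon_-$ I conclude $x^\star\ge F^{-1}(p+\epsilon_-)$, and from $F(x^\star)\le p+\epsilon_+$ I conclude $x^\star\le F^{-1}(p+\epsilon_+)$. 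Together these are the claimed sandwich $F^{-1}(p+\epsilon_-)\le F_{M'}^{-1}(p)\le F^{-1}(p+\epsilon_+)$.

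The main obstacle is the inversion step, which silently uses that $F$ (and $F_{M'}$) are invertible. With negative weights the signed sum $\sum_i w_i f_i$ is not automatically monotone, since terms with $w_i<0$ are nonincreasing, so I would invoke the standing assumption (already in force for the positive-weight case) that the posterior cdf $F$ is a genuine, continuous, strictly increasing cdf on the region of interest, and likewise for $F_{M'}$. Under this assumption $F^{-1}$ is an honest monotone inverse and both implications above are immediate. If instead one wishes to argue only with the generalized inverse $F^{-1}(q)=\inf\{x:F(x)\ge q\}$, the Galois relation $F^{-1}(q)\le x \iff q\le F(x)$ gives the lower bound directly, whereas the upper bound requires the usual right-continuity care to exclude a flat spot at level $p+\epsilon_+$; I would note that for mixtures of (log-)$t$ densities $F$ is continuous and strictly increasing, so these pathologies do not arise and the boundary case collapses to equality $x^\star=F^{-1}(p+\epsilon_+)$.
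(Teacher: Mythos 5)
Your proof is correct and follows essentially the same route as the paper's: evaluate $F$ at $x^\star = F_{M'}^{-1}(p)$, use the pointwise remainder bound to sandwich $F(x^\star)$ between $p+\epsilon_-$ and $p+\epsilon_+$, and then invert $F$. The paper's own proof is exactly this three-line argument; your explicit verification of $R_{M'}(x)\in[\epsilon_-,\epsilon_+]$ and your remark that invertibility of $F$ and $F_{M'}$ must be assumed (since with negative weights neither is automatically monotone) only fill in details the paper leaves implicit.
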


\begin{proposition}[Error Bound at a quantile\label{prop: error bound at p}]
Let $F(x)$ be defined as before, $\epsilon_1$, $\epsilon_2 \in (0,1)$, and $F_k(x)$ be an approximate to $F(x)$ such that $F^{-1}(p-\epsilon_1) \leq F_k^{-1}(p) \leq F^{-1}(p-\epsilon_2)$ for some $p\in(0,1)$. Assuming $p-\epsilon_1, p+\epsilon_2 \in (0,1)$, we have the following error bound at a quantile,\vspace{-5pt}
$$\left|F_k^{-1}(p) - F^{-1}(p)\right| \leq \epsilon \max_{\xi\in(p-\epsilon_1, p+\epsilon_2)} \left|\frac{d F^{-1}}{d x}(\xi)\right| ,$$
where $\epsilon = \max\{\epsilon_1, \epsilon_2\}$.
\end{proposition}

\vspace{-2pt}
By adaptively sparsifying our numerical quadrature schemes, we are able to discard a significant portion of summands in the mixture $F(x)$, which in turn, enables significant speedup of BTG model prediction. Empirical results are shown in $\S\ref{sec:experiments}$.

\subsection{Quantile Bounds}\label{sec:quantilbounds}
To compute posterior quantiles, we apply Brent's algorithm, a standard root-finding algorithm combining the secant and bisection methods, to the cdf defined in Equation \ref{eq: BTG t-mixture posterior}. Since Brent's algorithm is box-constrained, we use quantile bounds to narrow down the locations of the quantiles for $p\in \{0.025, 0.5, 0.975\}$. 

Let $F(x) = \sum_{i=1}^M w_if_i(x)$. Then we have the following bounds for the quantile $F^{-1}(p)$.

\begin{proposition}[Convex Hull]\label{qb:cv}
Let $F(x)$ be defined as before with $w_i>0$ and $\sum_{i=1}^M w_i = 1$. Then
\[ \min_i f_i^{-1}(p) \le F^{-1}(p) \le \max_i f_i^{-1}(p).\]
\end{proposition}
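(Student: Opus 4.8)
The plan is to leverage two facts: every $f_i$ is a non-decreasing cumulative distribution function, and the weights form a probability vector ($w_i>0$, $\sum_i w_i = 1$), so $F = \sum_i w_i f_i$ is itself a non-decreasing cdf and $F^{-1}(p) := \inf\{x : F(x) \ge p\}$ is well defined for $p\in(0,1)$. Write $x_i := f_i^{-1}(p)$, $x_{\min} := \min_i x_i$, and $x_{\max} := \max_i x_i$. The single tool I would use throughout is the order-reversing characterization of the generalized inverse of a monotone cdf, namely $F^{-1}(p) \le x \iff F(x) \ge p$ (equivalently $F^{-1}(p) > x \iff F(x) < p$), which sidesteps any need to assume continuity or strict monotonicity of the components.

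For the upper bound I would probe $F$ at $x_{\max}$. For each $i$ we have $x_{\max} \ge x_i = f_i^{-1}(p)$, so the Galois property applied to $f_i$ gives $f_i(x_{\max}) \ge p$. Averaging against the weights and using $\sum_i w_i = 1$,
\[
F(x_{\max}) = \sum_{i=1}^M w_i f_i(x_{\max}) \ge \sum_{i=1}^M w_i p = p,
\]
and the same characterization applied to $F$ then yields $F^{-1}(p) \le x_{\max}$.

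For the lower bound I would instead show $F(x) < p$ for every $x < x_{\min}$. Fix such an $x$; then $x < x_{\min} \le f_i^{-1}(p)$ for all $i$, so the contrapositive form of the equivalence gives $f_i(x) < p$, and hence $F(x) = \sum_i w_i f_i(x) < p$. Applying the characterization to $F$ once more, $F(x) < p$ means $F^{-1}(p) > x$; since this holds for all $x < x_{\min}$, taking the supremum over such $x$ gives $F^{-1}(p) \ge x_{\min}$. Combining the two bounds proves the claim.

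The computation itself is short, so the only place demanding care is the bookkeeping around the generalized inverse: one must resist writing $f_i(f_i^{-1}(p)) = p$ (which can fail at jumps or flat segments) and instead rely solely on the order-reversing equivalences above. For the $t$-mixtures arising in Equation \ref{eq: BTG t-mixture posterior} the component cdfs are continuous and strictly increasing, so this subtlety disappears entirely and the argument collapses to a direct evaluation of $F$ at the extreme component quantiles.
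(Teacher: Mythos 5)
Your proof is correct, and its computational core---evaluating the mixture at the extreme component quantiles and comparing the result against $p$---is the same mechanism the paper exploits; the difference lies in the framing and the level of generality. The paper argues by contradiction: assuming $F^{-1}(p) > \max_i f_i^{-1}(p)$, it expands $F(f_k^{-1}(p))$ at the maximizing index $k$, uses $f_k(f_k^{-1}(p)) = p$ together with monotonicity of the other components, and derives an algebraic contradiction. That route implicitly assumes each $f_i$ is continuous and strictly increasing, so that $f_i \circ f_i^{-1} = \mathrm{id}$ and so that $F^{-1}(p) > x$ forces $F(x) < p$. You instead argue directly through the Galois equivalence $F^{-1}(p) \le x \iff F(x) \ge p$ for generalized inverses, which proves the bound for arbitrary (possibly discontinuous or flat) component cdfs without ever inverting any $f_i$ exactly; as you note, this extra care is vacuous for the $t$-mixtures of Equation \ref{eq: BTG t-mixture posterior} but costs nothing. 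A further small benefit of your route: the paper's displayed chain contains a misstated step, bounding $\sum_{j\ne k} w_j f_j(f_k^{-1}(p))$ from below by $1 - w_k$, whereas the correct lower bound is $(1-w_k)p$ (each term is at least $p$, not $1$); the intended contradiction still goes through once corrected, but your direct argument sidesteps the issue entirely, and it also makes the lower bound explicit rather than deferring to ``analogous'' reasoning.
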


\vspace{-2pt}
\begin{proposition}[Singular Weight]\label{qb:sing} Let $F(x)$ be defined as before with $w_i>0$ and $\sum_{i=1}^M w_i = 1$. Let $\overline{w}_i = 1-w_i$. Then
\begin{align*}
\max_{p - \overline{w}_i \ge 0} f_i^{-1}(p- \overline{w}_i ) \le F^{-1}(p) 
\le \min_{p+\overline{w}_i  \le 1} f_i^{-1}(p+\overline{w}_i ).
\end{align*}
\end{proposition}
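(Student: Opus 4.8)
The plan is to prove each inequality separately by sandwiching the full mixture $F$ between a single rescaled component $f_i$, exploiting only the elementary fact that every component cdf satisfies $0 \le f_j(x) \le 1$. Throughout I interpret $f_i^{-1}$ and $F^{-1}$ as the generalized inverses (quantile functions), e.g.\ $F^{-1}(p) = \inf\{x : F(x) \ge p\}$, and I will lean on the two standard properties that $F$ is non-decreasing with $F(F^{-1}(p)) \ge p$, and that $x < F^{-1}(p)$ forces $F(x) < p$; the same holds componentwise for each $f_i$.

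For the upper bound, I would fix an index $i$ satisfying the stated constraint $p + \overline{w}_i \le 1$, which is equivalent to $w_i \ge p$, and set $x_i = f_i^{-1}(p + \overline{w}_i)$. Discarding the other non-negative terms gives $F(x_i) \ge w_i f_i(x_i) \ge w_i(p + \overline{w}_i)$. I would then verify the purely algebraic claim $w_i(p + \overline{w}_i) \ge p$: rearranged, this reads $w_i(1 - w_i) \ge p(1 - w_i)$, which holds precisely because $w_i \ge p$. Hence $F(x_i) \ge p$, so $F^{-1}(p) \le x_i$, and minimizing over all admissible $i$ yields the right-hand inequality.

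For the lower bound, I would fix $i$ with $p - \overline{w}_i \ge 0$, equivalent to $\overline{w}_i \le p$, and set $y_i = f_i^{-1}(p - \overline{w}_i)$. Here I would use the complementary estimate $\sum_{j \ne i} w_j f_j(x) \le \sum_{j \ne i} w_j = \overline{w}_i$, so $F(x) \le w_i f_i(x) + \overline{w}_i$. For any $x < y_i$ we have $f_i(x) < p - \overline{w}_i$, whence $F(x) < w_i(p - \overline{w}_i) + \overline{w}_i$. The crux is that this upper value simplifies to $w_i p + \overline{w}_i^{\,2}$, which is $\le p$ exactly when $\overline{w}_i \le p$. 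Thus $F(x) < p$ for every $x < y_i$, forcing $F^{-1}(p) \ge y_i$, and maximizing over admissible $i$ gives the left-hand inequality.

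The two cases are mirror images: the upper bound drops the other components to $0$ and is governed by $w_i \ge p$, while the lower bound inflates them to their total residual mass $\overline{w}_i$ and is governed by $\overline{w}_i \le p$. The range conditions in the statement ($p + \overline{w}_i \le 1$ and $p - \overline{w}_i \ge 0$) are therefore not ad hoc but are exactly the thresholds at which the single-component bound crosses level $p$. The one step I would be most careful about is the treatment of the generalized inverses when the $f_i$ are not strictly increasing or continuous; this is handled cleanly by the two monotonicity/infimum properties noted above, and in our setting the $f_i$ are continuous, strictly increasing $t$-distribution cdfs, so these technicalities do not bite.
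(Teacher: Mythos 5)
Your proposal is correct and follows essentially the same route as the paper's proof: both hinge on evaluating $F$ at $f_i^{-1}(p\pm\overline{w}_i)$, bounding the other components by $0$ (upper bound) or by their total mass $\overline{w}_i$ (lower bound), and closing with the same algebraic equivalence $w_i(p+\overline{w}_i)\ge p \Leftrightarrow w_i\ge p$. The only differences are cosmetic---you argue directly while the paper argues by contradiction, and you spell out the lower bound and the generalized-inverse technicalities that the paper dismisses as ``analogous.''
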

\vspace{-2pt}
When solving for $y^* = F^{-1}(p)$, we run Brent's algorithm using our quantile bounds as the box constraints. Furthermore, we adaptively set the termination conditions xtol and ftol to be on the same order of magnitude as the error in quadrature sparsification from \S \ref{sec:quadsparse}. This greatly accelerates convergence in practice. A comparison between the performances of the two quantile bounds outlined in this section can be found in \S \ref{sec:experiments}.

\subsection{Fast Cross Validation}\label{sec:fastloocv}
LOOCV is standard measure of model fit: in practice, it is most commonly used for tuning hyperparameters and for model selection. While fast LOOCV schemes are known for GP regression, it is less straightforward to perform LOOCV on BTG. In particular, the computational difficulty lies in two LOOCV sub-problems: a generalized least squares problem and principle sub-matrix determinant computation. These correspond to the terms in the BTG likelihood function and the BTG conditional posterior in Equation \ref{eq:BTG marginal posterior}. Being Bayesian about covariance and transform hyperparameters introduces additional layers of cost: LOOCV must be repeated at each quadrature node in hyperparameter space. This further motivates the need for an efficient algorithm.

For notational clarity, let $(-i)$ denote the omission of the $i$th point. For a kernel matrix, this means deletion of the $i$th row and column; for a vector, this indicates the omission of the $i$th entry. We seek to compute the mean $m_{\theta, \lambda}^{(-i)}$ and standard deviation $\sigma^{(-i)}_{\theta, \lambda} = \big({C}_{\theta, \lambda}^{(-i)}q_{\theta, \lambda}^{(-i)}\big)^{-1/2}$ of the t-distributions (Equations \ref{eqn:tdist}) for each submodel, obtained by leaving out the $i$th training point. Specifically, computing $\{q_{\theta, \lambda}^{(-i)}\}_{i=1}^n$ entails solving the generalized least squares problems for $i = 1, \dots, n$:
\begin{equation*}
\vspace{-5pt}
\argmin_{{ \beta}^{(-i)}} \big\|{Y}^{(-i)} -  {{M}_X}^{(-i)}{\beta}^{(-i)}\big\|_{{ K}_X^{(-i)}}^2,
\end{equation*}
\vspace{-5pt}where ${Y} = g\circ {f}_X$. In addition, computing ${C}_{\theta, \lambda}^{(-i)}$ and $m_{\theta, \lambda}^{(-i)}$ entails solves with ${K}_X^{(-i)}$, which naively takes $\mathcal{O}(n^3)$ per sub-problem. Therefore, the BTG LOOCV proceedure naively takes $\mathcal{O}(n^4)$ total time.

We develop an $\mathcal{O}(n^3)$ fast LOOCV algorithm for BTG using three building blocks: fast determinant computations (Proposition \ref{prop: det of principal minor}), fast abridged linear system solves (Proposition \ref{prop: Abridged LS}) and fast rank-one $\mathcal{O}(p^2)$ Cholesky down-dates (Proposition \ref{prop: downdate for matrices}). We refer to \cite{stewartvol1} for the rank-1 Cholesky downdate algorithm. For algorithm details as well as proofs, we refer to the supplement. The scaling behavior for our LOOCV algorithm is shown in Figure \ref{fig:loocv} in \S \ref{sec:experiments}.

\begin{proposition}[Determinant of a Principal Minor\label{prop: det of principal minor}]
\vspace{-2pt}
\begin{equation*}
\det\big(\Sigma^{(-i)}\big) = \det(\Sigma)\big(e_i^T\Sigma^{-1}e_i\big).
\end{equation*}
\end{proposition}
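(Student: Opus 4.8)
The statement to prove is the identity
\[
\det\big(\Sigma^{(-i)}\big) = \det(\Sigma)\big(e_i^T\Sigma^{-1}e_i\big),
\]
where $\Sigma^{(-i)}$ is the principal submatrix obtained by deleting the $i$th row and column. The plan is to recognize $\det(\Sigma^{(-i)})$ as (up to sign) a cofactor of $\Sigma$ and then connect that cofactor to an entry of the inverse via the adjugate formula. Concretely, I would first recall that the $(i,i)$ cofactor of $\Sigma$ is $C_{ii} = (-1)^{i+i}\det(M_{ii})$, where $M_{ii}$ is exactly the minor formed by striking row $i$ and column $i$, i.e.\ $M_{ii} = \Sigma^{(-i)}$. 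Since $(-1)^{2i}=1$, this gives the clean relation $\det(\Sigma^{(-i)}) = C_{ii}$, with no sign ambiguity to track precisely because we delete a matching row and column.

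Next I would invoke the standard adjugate (classical adjoint) identity $\Sigma^{-1} = \frac{1}{\det(\Sigma)}\operatorname{adj}(\Sigma)$, valid since $\Sigma$ is invertible (being a symmetric positive definite kernel matrix, so $\det(\Sigma)\neq 0$). The adjugate is the transpose of the cofactor matrix, so its $(i,i)$ entry is $C_{ii}$; transposition leaves diagonal entries fixed. Reading off the $(i,i)$ entry of $\Sigma^{-1}$ then yields
\[
\big(\Sigma^{-1}\big)_{ii} = \frac{C_{ii}}{\det(\Sigma)} = \frac{\det(\Sigma^{(-i)})}{\det(\Sigma)}.
\]
Finally, observing that $e_i^T \Sigma^{-1} e_i = (\Sigma^{-1})_{ii}$ picks out precisely this diagonal entry, I would multiply through by $\det(\Sigma)$ to recover the claimed identity.

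There is no serious obstacle here; the result is essentially a one-line consequence of Cramer's rule together with the bookkeeping that deleting a matching row and column produces a $(+1)$ cofactor sign. The only point requiring mild care is confirming that the $(i,i)$ minor coincides with $\Sigma^{(-i)}$ as defined in the paper (deletion of the $i$th row and column) and that the adjugate/cofactor sign convention is applied consistently; once the sign $(-1)^{2i}=1$ is noted, everything falls out cleanly. An alternative route, which I would mention only as a sanity check, is to use the Schur-complement / block-determinant formula after permuting the $i$th index to the last position, writing $\det(\Sigma)$ as $\det(\Sigma^{(-i)})$ times the Schur complement of $\Sigma^{(-i)}$ in $\Sigma$, and identifying that Schur complement with $1/(\Sigma^{-1})_{ii}$; this yields the same identity and confirms the cofactor computation.
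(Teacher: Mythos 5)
Your proof is correct. One thing to note: the paper itself never proves this proposition --- it is stated without proof in both the main text and the supplement, and is then used as a black box in the supplement's later arguments (for instance, the proof of the lemma on the determinant of the Schur complement of a principal minor invokes it directly). So there is no paper proof to compare yours against; your adjugate argument supplies the missing justification. The chain $\det(\Sigma^{(-i)}) = C_{ii}$, then $(\Sigma^{-1})_{ii} = C_{ii}/\det(\Sigma)$ by Cramer's rule, then $e_i^T\Sigma^{-1}e_i = (\Sigma^{-1})_{ii}$, is complete and rigorous, and the one hypothesis it needs --- invertibility of $\Sigma$ --- holds in the paper's setting, where $\Sigma$ is a positive definite kernel matrix (worth stating explicitly, since the proposition as written carries no hypotheses at all). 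Your alternative Schur-complement route is in fact the one more in keeping with the supplement's style: the paper's other proofs (the Schur-complement determinant lemma, the rank-one downdate for bilinear forms) all work through bordered matrices and LDL-type factorizations, and permuting index $i$ to the last position, writing $\det(\Sigma) = \det\big(\Sigma^{(-i)}\big)\, s$ with $s$ the Schur complement of $\Sigma^{(-i)}$ in $\Sigma$, and identifying $s = 1/(\Sigma^{-1})_{ii}$ yields the identity with that same machinery. Either route is valid; the cofactor one is shorter and self-contained.
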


\begin{proposition}[Abridged Linear System\label{prop: Abridged LS}]
Let $K\in \mathbb{R}^{n\times n}$ be of full rank, and let ${c}, {y}\in\mathbb{R}^n$ satisfy ${K}{c} = {y}$. Then if $r_i = c_i / {e}_i^{T}{K}^{-1}{e}_i$, we have:
\[{c}^{(-i)} = ({K}^{(-i)})^{-1}{y}^{(-i)} = {c} - r_i{K}^{-1}{e}_i .\]
\end{proposition}

\begin{proposition}[Rank one matrix downdate\label{prop: downdate for matrices}]
If $X\in\mathbb{R}^{n\times m}$ with $m<n$ has full column rank and $\Sigma$ is a positive definite matrix in $\mathbb{R}^{n\times n}$, then we have
\begin{equation*}
{X}^{(-i)^T} {\Sigma}^{(-i)^{-1}}{X}^{(-i)} = {X}^T \left( {\Sigma}^{-1} - \frac{{\Sigma}^{-1}{e}_i{e}_i^T{\Sigma}^{-1}}{{e}_i^T{\Sigma}^{-1}{e}_i} \right){X},
\end{equation*}
where ${\Sigma}^{(-i)}\in\mathbb{R}^{(n-1)\times(n-1)}$ is the $(i, i)th$ minor of ${\Sigma}$ and ${e}_i$ is the $i$th canonical basis vector.
\end{proposition}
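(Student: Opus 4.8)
The plan is to reduce to the case $i=n$ and then exhibit the rank-one--corrected matrix
\[
M \;:=\; \Sigma^{-1} - \frac{\Sigma^{-1}e_ie_i^T\Sigma^{-1}}{e_i^T\Sigma^{-1}e_i}
\]
explicitly as a block-diagonal matrix. Simultaneously permuting the rows and columns of $\Sigma$ and the rows of $X$ (by a permutation sending index $i$ to position $n$) leaves both sides of the claimed identity unchanged: the $(i,i)$ minor $\Sigma^{(-i)}$ and the deleted-row matrix $X^{(-i)}$ transform by the same consistent reordering of the remaining $n-1$ indices, so it suffices to treat $i=n$. Before anything else I would note that $\Sigma\succ 0$ guarantees that the principal submatrix $\Sigma^{(-i)}$ is itself positive definite, hence invertible, and that $e_i^T\Sigma^{-1}e_i>0$, so that $M$ is well defined.

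I would then establish the single key identity $M=\begin{bmatrix}(\Sigma^{(-i)})^{-1}&0\\0&0\end{bmatrix}$ in these coordinates, via two short observations rather than a full block-inverse computation. First, $Me_i=\Sigma^{-1}e_i-\Sigma^{-1}e_i\,(e_i^T\Sigma^{-1}e_i)/(e_i^T\Sigma^{-1}e_i)=0$, and since $M$ is symmetric also $e_i^TM=0$; thus $M$ vanishes on the last row and column and has the stated block form for some unknown top-left block $M'$. Second, for any $v$ with $v_i=0$ I would compute $M\Sigma v = v - \Sigma^{-1}e_i\,(e_i^Tv)/(e_i^T\Sigma^{-1}e_i)=v$, using $e_i^Tv=v_i=0$. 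Writing $v=(v',0)^T$ and $\Sigma=\begin{bmatrix}\Sigma^{(-i)}&b\\ b^T&d\end{bmatrix}$, the relation $M\Sigma v=v$ reads $M'\Sigma^{(-i)}v'=v'$ for all $v'\in\mathbb{R}^{n-1}$, which forces $M'=(\Sigma^{(-i)})^{-1}$.

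With $M$ identified, the conclusion is pure bookkeeping. Since $M_{jk}=0$ whenever $j=i$ or $k=i$ and equals $\big((\Sigma^{(-i)})^{-1}\big)_{jk}$ otherwise, expanding $X^TMX=\sum_{j,k}M_{jk}\,x_jx_k^T$, where $x_j\in\mathbb{R}^m$ denotes the $j$th row of $X$, drops exactly the $i$th row of $X$ from each factor and yields $X^{(-i)^T}(\Sigma^{(-i)})^{-1}X^{(-i)}$, as claimed. The hypotheses $m<n$ and full column rank of $X$ are not actually needed for the identity itself (they matter only downstream, to ensure the associated least-squares problem in the LOOCV application stays well posed).

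I expect the main obstacle to be organizational rather than deep: the substantive content is entirely in the identification of $M$, and the only point requiring genuine care is verifying that the rank-one term exactly annihilates the $i$th coordinate while leaving the compression of $\Sigma^{-1}$ equal to $(\Sigma^{(-i)})^{-1}$ (equivalently, that the marginal precision equals this rank-one correction of the full precision). An alternative, more computational route would expand $\Sigma^{-1}$ via the block/Schur-complement inverse formula with Schur complement $S=d-b^T(\Sigma^{(-i)})^{-1}b$ and verify the cancellation term by term; I prefer the two-line characterization above, since $M e_i=0$ together with $M\Sigma v = v$ on $\{v:v_i=0\}$ pins down $M$ without ever writing out $S$.
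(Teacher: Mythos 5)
Your proof is correct, and it takes a genuinely different route from the paper's. The paper never identifies the downdated matrix directly: it first proves the scalar case $(x^{(-i)})^T(\Sigma^{(-i)})^{-1}x^{(-i)} = x^T\hat\Sigma x$ by embedding everything in the bordered matrix $\left[\begin{smallmatrix}\Sigma & x\\ x^T & 0\end{smallmatrix}\right]$, factoring it in LDL form, and combining two determinant identities (the principal-minor formula $\det(\Sigma^{(-i)})=\det(\Sigma)\,e_i^T\Sigma^{-1}e_i$ and a formula for the determinant of the Schur complement of a principal minor); the matrix statement is then recovered by polarization, i.e., applying the scalar identity to $x+y$ and cancelling symmetric terms. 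You instead characterize $M=\Sigma^{-1}-\Sigma^{-1}e_ie_i^T\Sigma^{-1}/(e_i^T\Sigma^{-1}e_i)$ outright: $Me_i=0$ plus symmetry annihilates the $i$th row and column, and $M\Sigma v=v$ on $\{v: v_i=0\}$ forces the surviving block to equal $(\Sigma^{(-i)})^{-1}$, after which the stated identity is index bookkeeping. Your argument is shorter and more elementary --- no determinants, no bordered system, no polarization --- and it exposes the structural fact that the downdate is exactly the marginal precision $(\Sigma^{(-i)})^{-1}$ padded with zeros. You also correctly observe that the hypotheses $m<n$ and full column rank of $X$ play no role in the identity itself; in the paper they matter only so that the downdated normal equations in the LOOCV derivation remain positive definite. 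What the paper's longer route buys is economy of machinery within the appendix as a whole: the determinant lemmas it invokes are established anyway for the fast sub-model likelihood and determinant computations, so its proof of this proposition is a short corollary of results it already needs, whereas yours is self-contained.
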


\vspace{-5pt}
\subsection{Transformations}\label{sec:transformations}
\begin{table}[tbp]
\small
\centering
\caption{Elementary Transformations: analytic function forms and parameter constraints. Parameters are assumed to be in $\mathbb{R}$ unless stated otherwise. \label{table: transforms}}
\begin{tabular}{cccc}
\textbf{Name}    & $g(y)$ & \textbf{Req.} & \textbf{Count} \\ \hline 
Affine      & $a + by$     &   $b > 0$    & 1     \\
ArcSinh     & $a + b \text{~asinh}\left(\dfrac{y-c}{d}\right)$  & $b, d>0$ & 4 \\
SinhArcSinh &   $\text{sinh}(b \text{~asinh(y} - a))$   &  $b > 0$ & 2\\
Box-Cox     &  
$\begin{cases} \dfrac{y^\lambda - 1}{\lambda} & \mbox{if } \lambda > 0 \\ 
\log(y) & \mbox{if } \lambda = 0 \end{cases}$
  &   $\lambda \geq 0$ & 1 \\\hline 
\end{tabular}
\end{table}

The original BTG model of \cite{BTG} uses the Box-Cox family of power transformations and places an uniform prior on $\lambda$. Recent research has greatly expanded the set of flexible transformations available. \cite{WGP} uses a sum of tanh$(\cdot)$ transforms in the WGP model and \cite{CWGP} composes various transformations to provide a flexible compositional framework in the CWGP model. 

We apply BTG with more elementary transformations and compositions thereof, summarized in Table \ref{table: transforms}. As we show in \S \ref{sec:experiments}, these compositions have greater expressive power and generally outperform single transformations, at the expense of greater computational overhead. 
\vspace{-2pt}

\begin{figure*}[!t]   
    \centering
    \vspace{.3in}
        \centering
        \includegraphics[width=\textwidth]{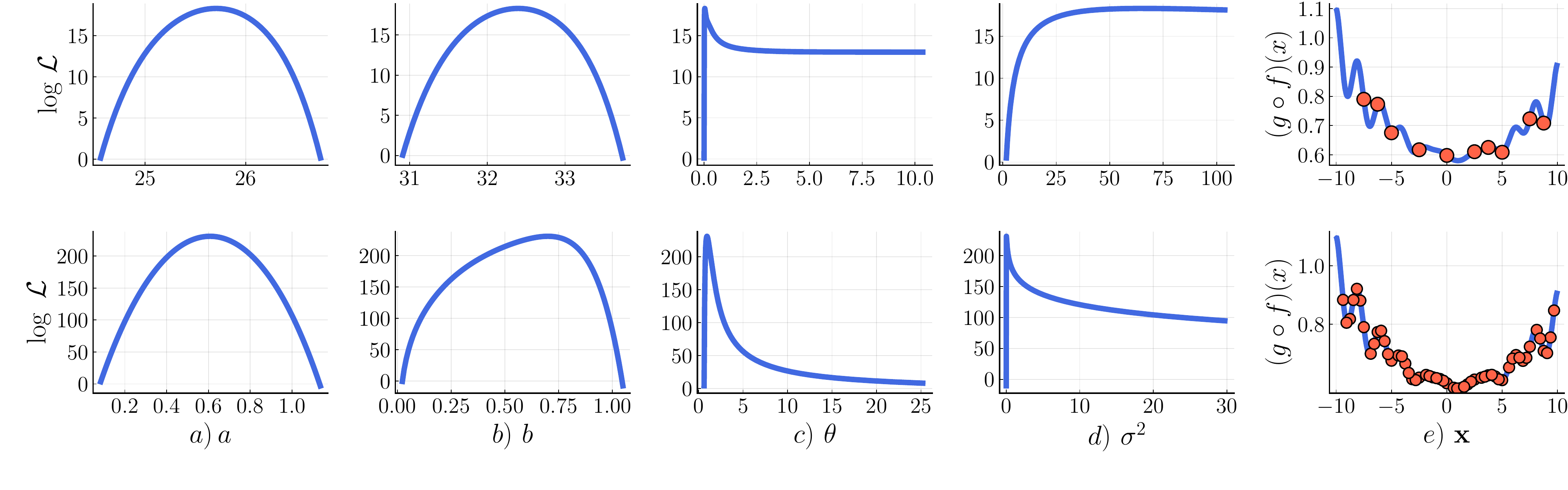}
    \vspace{-25pt}
    \caption{Plots of marginal log likelihoods for a SinhArcSinh (SA)-transformed WGP. In the first four columns, $a, b$ are transformation hyperparameters, and $\theta, \sigma^2$ are kernel hyperparameters. The top row represents a data-sparse 
    setting, while the bottom row represents a data-rich setting. $\sigma^2$ is only well-defined in the latter setting.}
    \label{fig: likelihood sensitivity in data-sparse}
    \vspace{-10pt}
\end{figure*}
\section{Experiments}\label{sec:experiments}
\vspace{-3pt}
We first perform a set of scaling experiments to validate the efficiency of our algorithms. Our efficient computational techniques enable us to run a series of thorough regression experiments, which demonstrate BTG's strong empirical performance when compared to appropriately selected baselines. 
\vspace{-2pt}
\subsection{Motivation for the Bayesian Approach}
\vspace{-3pt}
We examine the marginal log likelihoods of transformation and kernel parameters in the WGP model in Figure \ref{fig: likelihood sensitivity in data-sparse}. We observe that in data-sparse settings, the likelihood of $\theta$ and $\sigma^2$ are poorly-defined, with many possible hyperparameters that explain the data, while in data-rich settings, the distribution of $\theta$ and $\sigma^2$ are tightly concentrated. This suggests that being Bayesian about hyperparameters---which is the approach that BTG takes---could be more appropriate in the data-sparse setting than MLE estimation.

\subsection{Scaling experiments}

\begin{figure}[!hbp]
\label{fig:loocv}
    \vspace{-10pt}
    \centering
    \begin{subfigure}[IntSine]{0.5\textwidth}
        \centering
        \includegraphics[width=\textwidth]{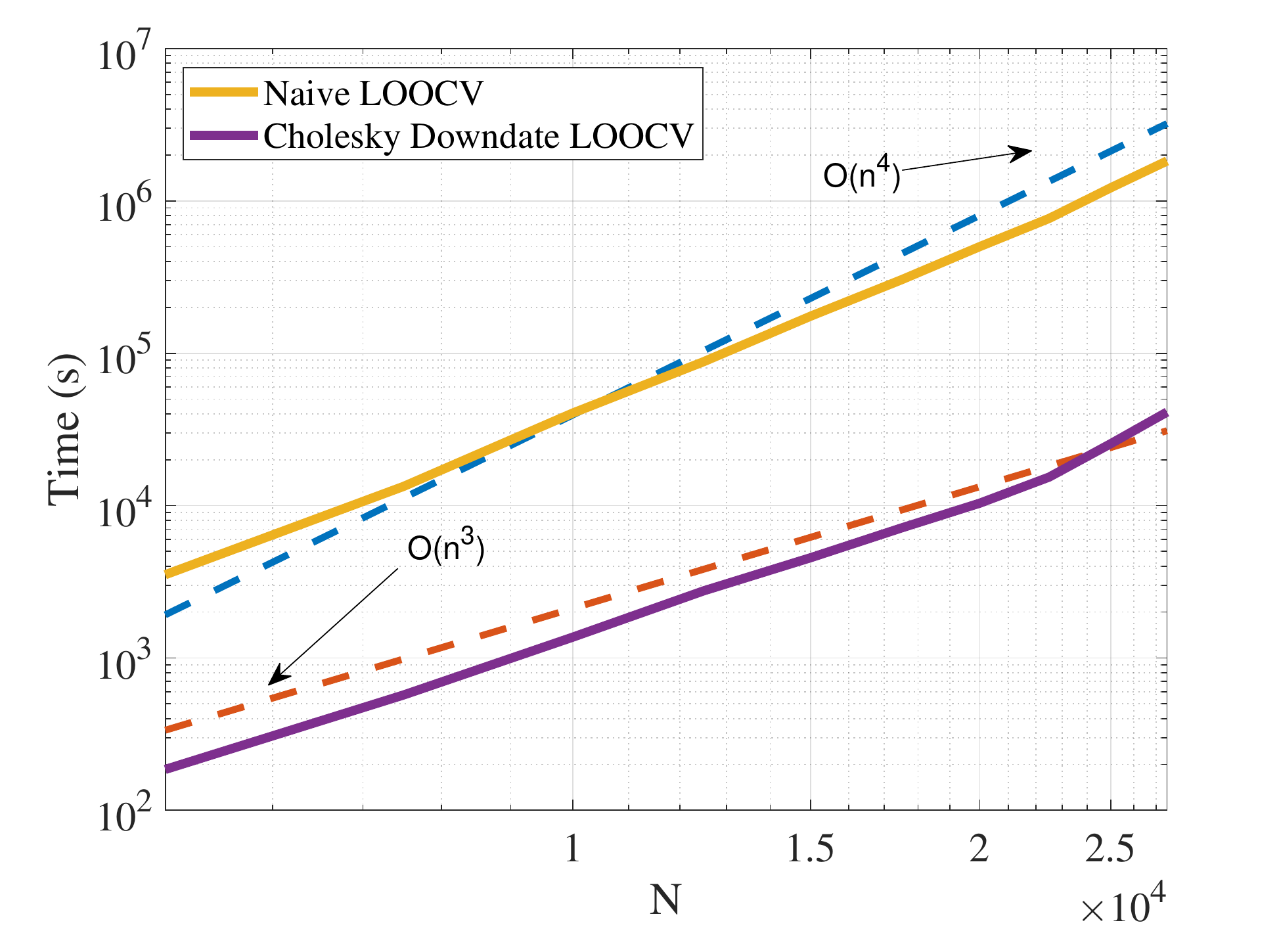}
    \end{subfigure}
    \vspace{-13pt}
    \caption{\small{LOOCV timing: computing posterior distribution parameters of $N$ sub-models. 
    }}
    \vspace{-10pt}
    \label{fig:loocv}
\end{figure}

\textbf{Fast Cross Validation}
To assess our fast LOOCV, we infer the sub-model posterior distribution moments on a toy problem in two different ways: with and without using Cholesky rank-one downdates on $R_X$ to compute the Cholesky factors for $R_X^{(-i)}$. We plot timing results in Figure \ref{fig:loocv}, which confirm that our $\mathcal{O}(n^3)$ method scales significantly better than the naive $\mathcal{O}(n^4)$ method.

\textbf{Sparse Grids vs QMC} We compare sparse grid and QMC quadrature rules under our quadrature sparsification framework. In our experiment, we begin with a handful of quadrature nodes, and gradually extend this set to the entire quadrature grid. 

We plot the resulting prediction errors in Figure \ref{fig:quad_compare}, and observe that the sparse grid quadrature rule yields lower prediction error: QMC converges to an MSE of $3.99$, while sparse grid converges to an MSE of $3.80$. We also observe that the sub-grids have similar \textit{weight concentration}, which we showed was a proxy for quadrature approximation error in \S \ref{sec:methods}. Therefore, as the mass of the dropped weights falls below $0.1$, the error in the integration scheme can increasingly be attributed to the error in the quadrature rule itself as opposed to sparsification. Since the error of the sparse grid rule decays faster than that of QMC, we expect sparse grid prediction error to also decay more quickly. This trend is supported by Figure $\ref{fig:quad_compare}$.

\begin{figure*}[!t]
    \centering
    \includegraphics[width = \textwidth]{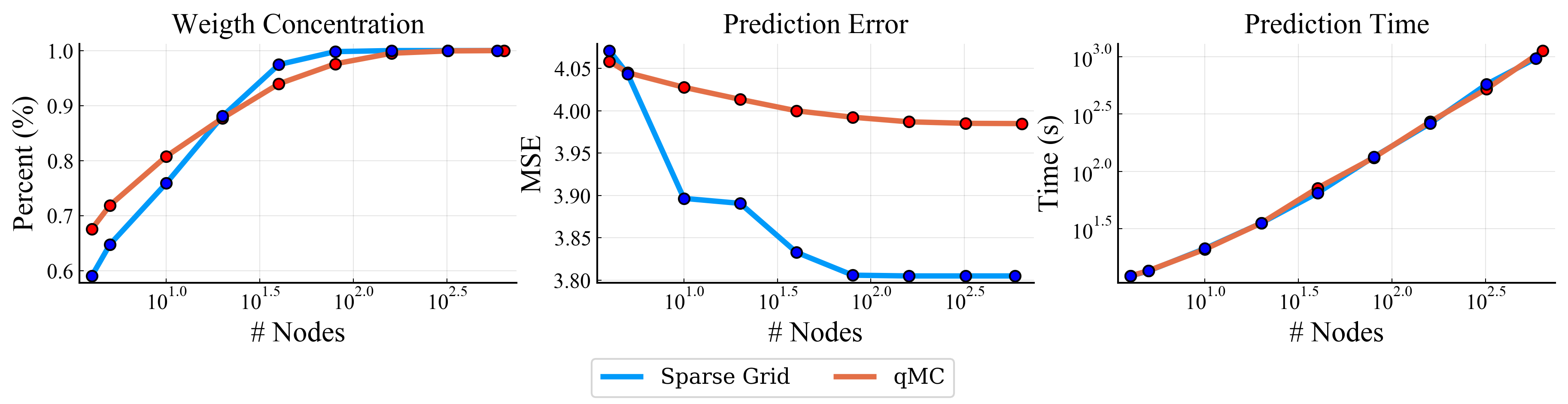}
    \vspace{-15pt}
    \caption{Comparison of sparse grid and QMC. Weights are ordered by decreasing magnitude. (\textit{Left}) Number of nodes versus percent of total mass they comprise  (\textit{Middle}) Number of nodes versus BTG prediction MSE (\textit{Right}) Number of nodes versus prediction time.}
    \label{fig:quad_compare}
\end{figure*}

We find that the joint-likelihood function is sufficiently smooth, so that sparse grids are effective. Finally, we confirm that inference time scales linearly with the number of quadrature nodes.

\textbf{Quantile Bounds Speedup} To assess the effectiveness of quantile bounds for root-finding, we record BTG prediction times using the convex hull bound and singular weight bound. We find that the convex hull bound decreases the overall computational overhead by a factor of at least two. The convex hull bound outperforms the singular weight bound for finding credible intervals and in overall time, but the singular weight bound was faster for finding the median in many scenarios. A detailed table of results can be found in the supplement.

\vspace{-10pt}
\subsection{Regression Experiments}

\begin{table*}[h] 
\centering 
\caption{Performance Results on IntSine, SixHumpCamel, Abalone, Wine and Creep datasets. Models: GP, WGP, CWGP, BTG. Transformations: I:identity, A:ArcSinh, SA:SinhArcSinh, BC:BoxCox, L:affine. RMSE and MAE are shown as metrics, the lower the better and best ones are bold.}
\begin{tabular}{lllllllllll}
\hline
          & \multicolumn{2}{c}{\textbf{IntSine}} & \multicolumn{2}{c}{\textbf{Camel}} & \multicolumn{2}{c}{\textbf{Abalone}} & \multicolumn{2}{c}{\textbf{Wine}} & \multicolumn{2}{c}{\textbf{Creep}} \\ \hline
\textbf{} & RMSE & MAE   & RMSE & MAE & RMSE   & MAE   & RMSE     & MAE     & RMSE  & MAE  \\ \hline
GP        & 0.227  & 0.171 & 2.003& 1.781  & 3.290  & 2.208 & 1.994    & 1.792   & 37.88 & 25.68\\
WGP-A     & 0.179  & 0.117 & 2.055& 1.815  & 3.097  & 2.068 & 0.811    & 0.677   & 35.48 & \textbf{24.27}  \\
WGP-SA    & 0.172  & 0.109 & 2.012& 1.788  & 2.992  & 2.022 & 0.809    & 0.670   & 61.62 & 45.85\\
WGP-BC    & 0.184  & 0.123 & 1.964& 1.751  & 2.826  & 1.940 & 1.045    & 0.784   & 40.38 & 25.49\\
CWGP-L-SA & 0.172  & 0.109 & 2.055& 1.815  & 3.117  & 2.100 & 0.808    & 0.670   & 91.89 & 73.05\\
CWGP-A-BC & 0.174  & 0.113 & 1.960& 1.747  & 3.088  & 2.069 & 0.808    & 0.670   & 37.89 & 25.34\\
BTG-I     & 0.169  & 0.100 & 1.820& 1.731  & 2.804  & 1.842 & 0.808    & 0.670   & 38.11 & 26.18\\
BTG-A     & 0.168  & 0.101 & 1.827& 1.741  & 2.890  & 1.900 & 0.807    & 0.669   & 38.68 & 27.68\\
BTG-SA    & 0.165  & 0.104 & 1.801& 1.796  & \textbf{2.791}    & 1.822 & 0.820    & 0.696   & 91.69 & 74.04\\
BTG-BC    & 0.170  & 0.102 & 1.675& 1.666  & 3.225  & 2.172 & 0.808    & 0.670   & 39.10 & 26.52\\
BTG-L-SA  & \textbf{0.145}    & \textbf{0.082}   & \textbf{1.673}  & \textbf{1.658} & 2.871  & 1.870 & 0.809    & 0.670   & 91.83 & 73.05\\
BTG-A-BC  & 0.159  & 0.090 & 1.828& 1.742  & 2.832  & \textbf{1.814}   & \textbf{0.802}      & \textbf{0.664}     & \textbf{35.25}   & 24.44  \\
\hline
\end{tabular}
\label{tab:regression}
\end{table*}

Our efficient algorithms allow us to test BTG on a set of synthetic and real-world regression tasks. We first consider two low-dimensional synthetic functions: \textbf{IntSine} and \textbf{SixHumpCamel} \citep{sixhumpcamel} of dimension 1 and 2, respectively. For synthetic functions, we sample training data using a Sobol sequence. We then consider high-dimensional real-world datasets from the UCI repository \citep{Dua:2019}: \textbf{Abalone}, \textbf{Wine}, and \textbf{Creep} of dimensions 8, 12, and 30. The total dimension of the hyperparmeter space is further inflated by transform parameters by as much as $8$. 

We compare with a standard GP model, WGP models and CWGP models with the same set of elementary transformations and their compositions in Table \ref{table: transforms}. We leave the tanh$(\cdot)$ transform in \cite{WGP} out of experiments since it is shown that elementary transforms in CWGP are competitive with the tanh$(\cdot)$ transform in \cite{CWGP}. 

We record regression root mean squared error (RMSE) and mean absolute error (MAE) in Table \ref{tab:regression}. The training set sizes vary from $50$ to $200$ (see supplement for full details). We observe that BTG outperforms other baselines on all the synthetic datasets and a majority of real datasets. Critically, the end-to-end inference time of BTG is comparable to other baselines; see the supplement for timing details. 

We also observe that composed BTG models tend to outperform single-transformation BTG models, which themselves tend to outperform their GP, WGP, and CWGP counterparts. This demonstrates the improved flexibility afforded by layered transformations, and is evidence of the superior performance possible with a fully Bayesian approach on small to medium datasets.

\section{Conclusion}
We have shown that a combination of sparse grid quadrature, quadrature sparsification, and tight quantile bounds significantly reduces the expense of the Bayesian approach---in certain cases rivaling even the speed of MLE---without degrading prediction accuracy. Furthermore, we proposed a fast BTG LOOCV algorithm for model selection and assessing model fit. 
Our framework allows the practitioner to control the trade-off between the speed and accuracy of the Bayesian approach by modulating the sparsification of the grid and tolerance of the quantile-finding routine. 
Lastly, we show that BTG compares favorably to WGP in terms of prediction accuracy on a set of synthetic and real regression experiments.

In future work, we would like to combine our approach with approximate GP inference to further improve computational efficiency. In addition, we would like to apply BTG to Bayesian optimization.

\bibliography{ref}

\newpage
\appendix
\section{Methodology}

\subsection{Quadrature Sparsification}
 We assume the posterior cdf is the mixture of cdfs $F(x) = \sum_{i=1}^M w_if_i(x)$, $0\le f_i(x)\le 1$, and $f_i(x)$ are monotone increasing for $i = 1, \dots, M$. Assume the weights $\{w_i\}_{i=1}^M$ are decreasingly ordered by magnitude from $1$ to $M$. We consider the quantiles of the approximant $F_k(x)$, a truncated and re-scaled $F(x)$.  

\begin{lemma}
 Define $k$ to be the smallest integer such that $\sum_{i=1}^k w_i \geq 1-\epsilon$. Then define the scaled, truncated mixture
 \vspace{-10pt}
\[F_k(x) := \frac{1}{c}\sum_{i=1}^k w_if_i(x),\quad  c := \sum_{i=1}^k w_i\vspace{-6pt}\]
We have
\begin{equation*}
|F(x)-F_k(x)| \le 2\epsilon.
\end{equation*}
\end{lemma}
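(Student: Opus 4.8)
The plan is to split the total error into a \emph{truncation} component and a \emph{rescaling} component and bound each separately by the discarded mass $\delta := 1 - c = \sum_{i=k+1}^M w_i$. First I would record the single fact about $k$ that the whole argument rests on: since $k$ is the smallest index with $\sum_{i=1}^k w_i \ge 1-\epsilon$, we have $c = \sum_{i=1}^k w_i \ge 1-\epsilon$, hence $0 \le \delta \le \epsilon$. Because the weights are positive and sum to one, $c \in (0,1]$, so the rescaling factor $1/c$ is well defined and the normalized truncation $F_k$ makes sense.

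Next I would introduce the unnormalized truncation $S_k(x) := \sum_{i=1}^k w_i f_i(x) = c\,F_k(x)$ and apply the triangle inequality,
\[
|F(x) - F_k(x)| \le |F(x) - S_k(x)| + |S_k(x) - F_k(x)|.
\]
The first term is exactly the tail mass weighted by the cdfs, $F(x) - S_k(x) = \sum_{i=k+1}^M w_i f_i(x)$, and since $0 \le f_i(x) \le 1$ this lies in $[0,\delta]$, giving $|F(x) - S_k(x)| \le \delta \le \epsilon$. For the second term I would use $S_k(x) = c\,F_k(x)$ to write $|S_k(x) - F_k(x)| = (1-c)\,F_k(x)$; because $F_k$ is a convex combination of the cdfs $f_i$ it satisfies $0 \le F_k(x) \le 1$, so this term is at most $1 - c = \delta \le \epsilon$. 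Adding the two bounds yields $|F(x) - F_k(x)| \le 2\epsilon$.

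The argument is essentially routine, and no step is a genuine obstacle; the only point requiring care is the bookkeeping that cleanly isolates the two independent sources of error — discarding the tail weights versus renormalizing the weights that are kept — and the observation that each is controlled by the same quantity $\delta$. I would also note in passing that a slightly sharper accounting, retaining the sign information $S_k(x)(1 - 1/c) \le 0$ and $\sum_{i>k} w_i f_i(x) \ge 0$ instead of taking absolute values, actually produces the tighter constant $\epsilon$; I would nonetheless state the $2\epsilon$ bound, since that is the form consumed by the downstream quantile propositions (\ref{qb:pos} and \ref{qb:neg}).
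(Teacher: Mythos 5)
Your proof is correct and is essentially the paper's own argument: the paper likewise writes $F(x)-F_k(x)$ as the sum of the rescaling term $\left(1-\tfrac{1}{c}\right)\sum_{i=1}^k w_i f_i(x)$ and the tail term $\sum_{i=k+1}^M w_i f_i(x)$, bounds each by $1-c\le\epsilon$ using $0\le f_i\le 1$, and concludes $|F(x)-F_k(x)|\le 2(1-c)\le 2\epsilon$; your intermediate quantity $S_k$ merely makes this same decomposition explicit. Your closing remark is also right that keeping the signs (rescaling term $\le 0$, tail term $\ge 0$) sharpens the constant to $\epsilon$, a refinement the paper does not exploit.
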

\begin{proof}
Let $R_k(x) = |F_k(x) - F(x)|$. We have
\begin{align*}
R_k(x)& = \left|\left(\frac{1}{c} - 1\right)\sum_{i=1}^k w_if_i(x) + \sum_{i=k+1}^M w_if_i(x) \right|  \\
&\le \left|\left(\frac{1}{c} - 1\right)\sum_{i=1}^k w_if_i(x)\right| + \left| \sum_{i=k+1}^M w_if_i(x) \right| \\ 
& \leq \frac{1-c}{c} \sum_{i=1}^k w_if_i(x) + 1-c \\
& \leq 2(1-c) \leq 2\epsilon.
\end{align*}
\vspace{-6pt}
\end{proof}

\begin{proposition}[Error Bound for Positive Weights]\label{qb:dom}
For any $\epsilon \in (0,1)$, let $k$ be the smallest integer such that $\sum_{i=1}^k w_i \geq 1-\epsilon$. Then define the scaled, truncated mixture
\vspace{-10pt}
\[
F_k(x) := \frac{1}{c}\sum_{i=1}^k w_if_i(x),\quad  c := \sum_{i=1}^k w_i.
\vspace{-5pt}
\] 
Let $p\in (0, 1)$ and assume that $p\pm 2\epsilon\in (0, 1)$. Then we have the bound:
\begin{equation*}
F^{-1}(p-2\epsilon) \le F_k^{-1}(p) \le F^{-1}(p+2\epsilon).
\end{equation*}
\end{proposition}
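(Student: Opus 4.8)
The plan is to derive the quantile sandwich directly from the pointwise bound $|F(x)-F_k(x)|\le 2\epsilon$ established in Lemma~\ref{lemma: pointwise error bound}, using only the monotonicity of the mixture CDFs. Because each $f_i$ is a strictly increasing, continuous CDF and the weights are positive with $\sum_i w_i = 1$, both $F$ and the rescaled truncation $F_k$ are strictly increasing and continuous, so their inverses are genuine monotone increasing functions. I would note this at the outset so that $F^{-1}$ and $F_k^{-1}$ are unambiguous, and so that applying $F^{-1}$ to an inequality preserves its direction.

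The core argument is a three-line evaluate--sandwich--invert step. First I set $y := F_k^{-1}(p)$, so that $F_k(y)=p$ by definition of the inverse. Next I apply the pointwise bound at the single point $x=y$, which gives $|F(y)-F_k(y)|\le 2\epsilon$, i.e.\ $p-2\epsilon \le F(y)\le p+2\epsilon$. Finally, since $F^{-1}$ is monotone increasing and is well-defined at $p\pm 2\epsilon$ (guaranteed by the hypothesis $p\pm 2\epsilon\in(0,1)$), I apply it to each side of this two-sided inequality to obtain $F^{-1}(p-2\epsilon)\le y = F_k^{-1}(p)\le F^{-1}(p+2\epsilon)$, which is exactly the claim.

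The only genuine subtlety---and the step I would be most careful about---is the well-definedness and monotonicity of the quantile functions together with the boundary conditions. If one wishes to avoid assuming strict monotonicity and continuity of the $f_i$, the true inverse should be replaced by the generalized inverse $F^{-1}(u)=\inf\{x:F(x)\ge u\}$, and the evaluate--invert step rephrased through the Galois-type equivalence $F^{-1}(u)\le x \iff u\le F(x)$, valid for right-continuous nondecreasing $F$: the lower bound then follows from $F(y)\ge F_k(y)-2\epsilon\ge p-2\epsilon$, and the upper bound from the symmetric statement. In the BTG setting, however, the $f_i$ are $t$-distribution CDFs, which are continuous and strictly increasing on $\R$, so positive mixtures thereof inherit these properties and the genuine-inverse version suffices without any generalized-inverse machinery. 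The assumption $p\pm 2\epsilon\in(0,1)$ is precisely what keeps the perturbed arguments inside the domain of $F^{-1}$, so no further hypotheses are needed.
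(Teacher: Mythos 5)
Your proposal is correct and follows essentially the same route as the paper's own proof: evaluate at the point $x^* = F_k^{-1}(p)$, apply the pointwise bound $|F(x^*)-F_k(x^*)|\le 2\epsilon$ from Lemma~\ref{lemma: pointwise error bound}, and then invert using monotonicity of $F^{-1}$. Your additional remarks on continuity, strict monotonicity, and generalized inverses are careful clarifications of hypotheses the paper leaves implicit, but they do not change the argument.
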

\begin{proof}
Let $F_k(x^*)=p$. Then $|p-F_k(x^*)|\le 2\epsilon$, so
\begin{equation*}p-2\epsilon \le F(x^*)\le p+2\epsilon\end{equation*}
It follows that 
\begin{equation*}F^{-1}(p-2\epsilon) \le x^* \le F^{-1}(p+2\epsilon).\end{equation*}
\end{proof}
\begin{proposition}[Error Bound for Negative Weights]\label{qb:neg}
Let $F(x)$ be defined as before, except each $w_i$ is no longer required to be positive. Consider the split $F(x) = F_{M'}(x)+R_{M'}(x)$, where $F_{M'}(x) = \sum_{i=1}^{M'}w_if_i(x)$ and $R_{M'}(x) = \sum_{i=M'+1}^M w_if_i(x)$. Then for any $x$, we have $R_{M'}(x)\in [\epsilon_{-}, \epsilon_+]$, where the epsilons are defined as the sum of positive (resp. negative) weights of $R_{M'}(x)$
\vspace{-5pt}
\[
\epsilon_{-} = \sum_{i = M'+1}^M [w_i]_{-} \le 0 \;,\; \epsilon_{+} = \sum_{i = M'+1}^M [w_i]_{+} \ge 0.
\vspace{-5pt}
\]
Then we bound $F^{-1}(p)$ as follows:
\vspace{-2pt}
\[ 
F^{-1}(p + \epsilon_-) \le F_{M'}^{-1}(p) \le F^{-1}(p + \epsilon_{+}).  
\vspace{-5pt}
\]
\end{proposition}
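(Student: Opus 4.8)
The plan is to mirror the positive-weight argument (Proposition~\ref{qb:dom}), but first I must control the \emph{signed} remainder $R_{M'}$. I would begin by establishing the pointwise bound $R_{M'}(x)\in[\epsilon_-,\epsilon_+]$ for every $x$. Since each $f_i$ is a cdf we have $f_i(x)\in[0,1]$, so a tail term $w_if_i(x)$ with $w_i>0$ lies in $[0,w_i]$, while a term with $w_i<0$ lies in $[w_i,0]$. Summing over $i=M'+1,\dots,M$ and writing $[w_i]_+=\max(w_i,0)$, $[w_i]_-=\min(w_i,0)$, the upper extreme is reached by sending each $f_i$ attached to a positive weight to $1$ and each $f_i$ attached to a negative weight to $0$, which gives $R_{M'}(x)\le\sum_{i>M'}[w_i]_+=\epsilon_+$; the lower bound $R_{M'}(x)\ge\sum_{i>M'}[w_i]_-=\epsilon_-$ follows by the symmetric choice.

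Next I would transfer this pointwise bound to the quantile by the same device used for positive weights. Let $x^\star:=F_{M'}^{-1}(p)$, so that $F_{M'}(x^\star)=p$. Using the split $F=F_{M'}+R_{M'}$ and evaluating at $x^\star$, the remainder bound yields
\[
p+\epsilon_-\ \le\ F(x^\star)\ \le\ p+\epsilon_+ .
\]
The hypotheses $p+\epsilon_-,\,p+\epsilon_+\in(0,1)$ ensure these arguments lie in the range where $F^{-1}$ is defined. Applying the monotone increasing inverse $F^{-1}$ across this chain preserves the inequalities and gives $F^{-1}(p+\epsilon_-)\le x^\star\le F^{-1}(p+\epsilon_+)$, which is precisely the asserted bound on $F_{M'}^{-1}(p)$.

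The main obstacle is monotonicity. Unlike the positive-weight case, a mixture containing some $w_i<0$ need not be increasing, so neither $F^{-1}$ nor $F_{M'}^{-1}$ is automatically single-valued, and the final order-preserving application of $F^{-1}$ can break down. The clean remedy is to lean on the standing assumption that $F$ represents a genuine posterior cdf that is monotone increasing on the region of interest, so that $F^{-1}$ exists and is order-preserving there; one then interprets $x^\star$ as a root of $F_{M'}(x)=p$ and verifies that the bracket $[F^{-1}(p+\epsilon_-),F^{-1}(p+\epsilon_+)]$ still contains it. I would state this monotonicity hypothesis explicitly, since once the remainder bound is in hand it is exactly what allows the positive-weight proof to carry over essentially verbatim.
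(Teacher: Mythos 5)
Your proposal is correct and follows essentially the same route as the paper's proof: evaluate the split $F = F_{M'} + R_{M'}$ at $x^\star = F_{M'}^{-1}(p)$, apply the remainder bound $R_{M'}(x^\star) \in [\epsilon_-, \epsilon_+]$, and invert $F$ across the resulting chain of inequalities. Your version is in fact slightly more complete, since you explicitly derive the pointwise bound on $R_{M'}$ from $f_i(x) \in [0,1]$ and flag the monotonicity of $F$ needed for the final inversion, both of which the paper's proof takes for granted.
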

\begin{proof}
Let $F_{M'}(x^*) = p$. Then we can wrote $F(x^*) = p + R_{M'}(x^*)$. Since $\epsilon_{-} \le R_{M'}(x^*) \le \epsilon_+$, it follows that
\[p + \epsilon_- \le F(x^*) \le p + \epsilon_{+} \]
from which the result follows.
\end{proof}

\begin{proposition}[Error Bound at a quantile\label{prop: error bound at p}]
Let $F(x)$ be defined as before, $\epsilon_1$, $\epsilon_2 \in (0,1)$, and $F_k(x)$ be an approximate to $F(x)$ such that $F^{-1}(p-\epsilon_1) \leq F_k^{-1}(p) \leq F^{-1}(p-\epsilon_2)$ for some $p\in(0,1)$. Assuming $p-\epsilon_1, p+\epsilon_2 \in (0,1)$, we have the following error bound at a quantile,\vspace{-5pt}
$$\left|F_k^{-1}(p) - F^{-1}(p)\right| \leq \epsilon \max_{\xi\in(p-\epsilon_1, p+\epsilon_2)} \left|\frac{d F^{-1}}{d x}(\xi)\right| ,$$
where $\epsilon = \max\{\epsilon_1, \epsilon_2\}$.
\end{proposition}
\begin{proof} We have \begin{align*}
    & \left|F_k^{-1}(p) - F^{-1}(p)\right| \\  \leq &  
      \max \{|F^{-1}(p-\epsilon_1) - F^{-1}(p)|, |F^{-1}(p-\epsilon_2) - F^{-1}(p)| \} \\
      \leq & \max \{\epsilon_1, \epsilon_2 \} \max_{\xi\in(p-\epsilon_1, p+\epsilon_2)} \left|\frac{d F^{-1}}{d x}(\xi)\right|.
\end{align*}
\end{proof}

\subsection{Quantile Bounds}
\begin{proposition}[Convex Hull\label{qb:cv}]
Let $F(x)$ be defined as before with $w_i>0$ and $\sum_{i=1}^M w_i = 1$. Then
\[ \min_i f_i^{-1}(p) \le F^{-1}(p) \le \max_i f_i^{-1}(p).\]
\end{proposition}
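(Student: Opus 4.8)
The plan is to let $x^{*} = F^{-1}(p)$, so that $F(x^{*}) = p$, and to show directly that $x^{*}$ lies between the two extreme quantiles $a := \min_i f_i^{-1}(p)$ and $b := \max_i f_i^{-1}(p)$. The whole argument rests on two structural facts: first, each component $f_i$ is a monotone increasing cdf (as stipulated in the setup preceding the proposition); second, because the weights are positive and sum to one, $F = \sum_i w_i f_i$ is itself monotone increasing, and the normalization $\sum_i w_i = 1$ lets me compare $F(x)$ against $p$ by comparing each $f_i(x)$ against $p$.

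For the upper bound, I would evaluate $F$ at $b$. Since $b \ge f_i^{-1}(p)$ for every $i$, monotonicity of $f_i$ gives $f_i(b) \ge f_i\big(f_i^{-1}(p)\big) = p$. Summing against the weights,
\[
F(b) = \sum_{i=1}^M w_i f_i(b) \ge \sum_{i=1}^M w_i\, p = p = F(x^{*}).
\]
Because $F$ is monotone increasing, $F(b) \ge F(x^{*})$ forces $b \ge x^{*}$, which is the desired upper bound $F^{-1}(p) \le \max_i f_i^{-1}(p)$. The lower bound is entirely symmetric: at $a = \min_i f_i^{-1}(p)$ we have $a \le f_i^{-1}(p)$, hence $f_i(a) \le p$ for each $i$, so $F(a) = \sum_i w_i f_i(a) \le p = F(x^{*})$, and monotonicity yields $a \le x^{*}$, i.e. $\min_i f_i^{-1}(p) \le F^{-1}(p)$. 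Chaining the two gives exactly the claimed sandwich.

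I expect the only real subtlety — and thus the main obstacle — to be well-definedness of the inverses $f_i^{-1}$ and $F^{-1}$. A generic cdf may be flat or have jumps, in which case $f_i^{-1}$ must be read as a generalized (quantile) inverse and the identities $f_i(f_i^{-1}(p)) = p$ used above need not hold with equality; one would instead invoke right-continuity and the inequalities $f_i(f_i^{-1}(p)) \ge p$ and $f_i(x) \ge p \iff x \ge f_i^{-1}(p)$. In the BTG application, however, the components are Student-$t$ densities composed with strictly increasing transforms, so each $f_i$ is continuous and strictly increasing; the inverses are genuine and the clean equalities hold. I would therefore state the continuity/strict-monotonicity assumption explicitly (consistent with the ``defined as before'' convention) and then the argument reduces to the two short monotonicity-plus-normalization steps above, with no further computation required.
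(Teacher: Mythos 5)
Your proof is correct and is essentially the paper's argument in contrapositive form: both evaluate the mixture $F=\sum_i w_i f_i$ at an extremal component quantile, use componentwise monotonicity to conclude $f_j\big(\max_i f_i^{-1}(p)\big) \ge p$ for every $j$, and invoke the normalization $\sum_i w_i = 1$ to compare against $p$. The paper phrases this as a proof by contradiction---and, incidentally, its displayed bound $\sum_{j\ne k} w_j f_j(f_k^{-1}(p)) \ge 1-w_k$ is a slip, since monotonicity only justifies $\ge (1-w_k)p$, which already gives the contradiction $p>p$---whereas your direct sandwich argument, together with the explicit caveat about generalized inverses versus continuous strictly increasing cdfs (needed to pass from $F(b)\ge F(x^*)$ to $b \ge x^*$), is if anything the more careful rendering of the same idea.
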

\begin{proof}
\vspace{-10pt}
Assume for the sake of contradiction that $F^{-1}(p)> \max_i f_i^{-1}(p)$. Let $k = \argmax_i f_i^{-1}(p)$. Then  
\begin{align*}
    p &> F(f_{i^*}^{-1}(p)) 
    \\&= \sum_{j=1}^M w_jf_j(f_{k}^{-1}(p))
    \\&= w_{k}p + \sum_{j\ne i} w_jf_j(f_{k}^{-1}(p))
    \\& \ge w_kp + 1 - w_k.
\end{align*}
This implies \vspace{-8pt}$$p-1+w_k-w_kp >0 \Longleftrightarrow (w_k-1)(1-p)>0,\vspace{-8pt}$$ which leads to a contradiction. The lower bound is analogous.
\end{proof}

\begin{proposition}[Singular Weight\label{qb:sing}] Let $F(x)$ be defined as before with $w_i>0$ and $\sum_{i=1}^M w_i = 1$. Let $\overline{w}_i = 1-w_i$. Then
\begin{align*}
\max_{p - \overline{w}_i \ge 0} f_i^{-1}(p- \overline{w}_i ) \le F^{-1}(p) 
\le \min_{p+\overline{w}_i  \le 1} f_i^{-1}(p+\overline{w}_i ).
\end{align*}
\end{proposition}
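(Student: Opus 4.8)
The plan is to prove each of the two inequalities by evaluating the mixture cdf $F$ at the proposed bound and then invoking the monotonicity of $F$ to turn a value comparison into a quantile comparison. The structural fact I would establish first is a one-sided sandwich obtained by singling out the index $i$: writing $F(x) = w_i f_i(x) + \sum_{j\ne i} w_j f_j(x)$, the residual sum satisfies $0 \le \sum_{j\ne i} w_j f_j(x) \le \overline{w}_i$ for every $x$, since each $f_j(x)\in[0,1]$ and the remaining weights sum to $1-w_i = \overline{w}_i$. This yields $w_i f_i(x) \le F(x) \le w_i f_i(x) + \overline{w}_i$, which is the only inequality I will need.

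For the upper bound, I would fix an admissible $i$ (one with $p+\overline{w}_i\le 1$) and set $x_0 = f_i^{-1}(p+\overline{w}_i)$. The left half of the sandwich gives $F(x_0) \ge w_i f_i(x_0) = w_i(p+\overline{w}_i)$. The crucial observation is that the admissibility condition $p+\overline{w}_i\le 1$ is equivalent to $p\le w_i$; with this, $w_i(p+\overline{w}_i) - p = \overline{w}_i(w_i - p) \ge 0$, so $F(x_0)\ge p$, and monotonicity of $F$ gives $F^{-1}(p)\le x_0 = f_i^{-1}(p+\overline{w}_i)$. Minimizing over admissible $i$ delivers the stated upper bound. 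The lower bound is the mirror image: for $i$ with $p-\overline{w}_i\ge 0$, set $x_0 = f_i^{-1}(p-\overline{w}_i)$, use the right half of the sandwich to obtain $F(x_0) \le w_i(p-\overline{w}_i)+\overline{w}_i$, and check that $p-\overline{w}_i\ge 0 \iff p\ge\overline{w}_i$ makes $w_i(p-\overline{w}_i)+\overline{w}_i - p = -\overline{w}_i(p-\overline{w}_i)\le 0$; then $F(x_0)\le p$ forces $F^{-1}(p)\ge x_0$, and maximizing over admissible $i$ finishes.

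I expect the only real obstacle to be conceptual rather than computational: recognizing that the hypotheses $p\pm\overline{w}_i\in(0,1)$ are not mere bookkeeping to keep the arguments of $f_i^{-1}$ in range, but are exactly the inequalities ($p\le w_i$ for the upper bound, $p\ge\overline{w}_i$ for the lower) that make the crude one-sided sandwich strong enough to straddle $p$. A secondary point I would handle carefully is the monotonicity/invertibility bookkeeping, namely that each nondecreasing $f_i$ satisfies $f_i(f_i^{-1}(t)) = t$ at the relevant points and that $F$ nondecreasing lets me pass from $F(x_0)\ge p$ to $F^{-1}(p)\le x_0$ and dually; no regularity beyond the definition of the quantile function is needed.
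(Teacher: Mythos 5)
Your proof is correct and is essentially the paper's argument: both single out the $i$th component at $x_0 = f_i^{-1}(p\pm\overline{w}_i)$, bound the remaining terms $j\ne i$ (below by $0$ for the upper bound, above by $\overline{w}_i$ for the lower), and rest on the same identity $w_i(p+\overline{w}_i)-p=\overline{w}_i(w_i-p)$ together with the observation that the admissibility condition $p+\overline{w}_i\le 1$ (i.e.\ $p\le w_i$) is precisely what fixes its sign. The only difference is presentational: the paper phrases the step as a proof by contradiction (assuming $F^{-1}(p) > f_i^{-1}(p+\overline{w}_i)$ and deriving $0 > \overline{w}_i(w_i-p)$), whereas you argue directly that $F(x_0)\ge p$ and invoke monotonicity of $F$.
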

\begin{proof}
Assume for sake of contradiction that
\[F^{-1}(p) > f_i^{-1}(p+\overline{w}_i)\]
Then
\begin{align*}
    p &> F\left(f_i^{-1}(p+\overline{w}_i)\right)
    \\& = \sum_{j=1}^N w_j f_j \left(f_i^{-1}(p+\overline{w}_i)\right) 
    \\& = w_i(p+\overline{w}_i) + \sum_{j\ne i} w_jf_j\left(f_i^{-1}(p+\overline{w}_i)\right)
    \\& \ge w_i(p+\overline{w}_i)
\end{align*}
However this implies that $0 > \overline{w}_i(w_i-p)$, which is a contradiction because $1\ge w_i$ and $w_i \ge p$ (by the assumption that $p+\overline{w}_i\le 1$). The lower bound is analogous.
\end{proof}

\section{FAST CROSS VALIDATION}
\label{fastloocv}
 In this section, we discuss results leading up to $\mathcal{O}(n^3)$ LOOCV algorithms for BTG, which are given by Algorithm \ref{algo: btgloocv} and Algorithm \ref{algo: btgdet}. Naively, the BTG LOOCV procedure has $\mathcal{O}(n^4)$ time cost, due to the costs associated with solving generalized least squares problems related by single-point deletion and evaluating determinants of principle submatrices. We present relevant propositions used to solve these LOOCV sub-problems efficiently in \S\ref{subsections: building blocks}, and derive our full algorithm in \S\ref{subsection:derivation}.
 
\textbf{Notation} Let ${f}_X, {M}_X, {K}_X, \bx$, $\sigma_{\theta, \lambda}, q_{\theta, \lambda}, m_{\theta, \lambda}$ and $C_{\theta, \lambda}$ be defined same as in the paper. As before, we use the $(-i)$ notiation to represent to omission of information from the $i$th data point. For the BTG LOOCV problem, we consider the $n$ submodels $\{\text{Model}^{(-i)}\}_{i=1}^n$ trained on $\{\textbf{x}^{(-i)}, f_X^{(-i)}, M_X^{(-i)}\}_{i=1}^n$: the location-covariate-label triples obtained by omitting data points one at a time. We wish to efficiently compute the posterior predictive distributions of all $n$ submodels indexed by $i\in\{1, ..., n\}$,
\begin{equation}
\label{eq:tmixture}
 p\big(f(\mathbf x^{(-i)}) \pvert f_X^{(-i)} \big) \propto   \sum_{j=1}^M w_j L_j J_j p(\theta_j)p(\lambda_j),
\end{equation}
where $L_j = p\left(g_{\lambda_j}(f(\bx^{(-i)})) \pvert \theta_j, \lambda_j, f_X^{(-i)} \right) $ and \\ $J_j =  p\big(f_X^{(-i)} | \theta_j, \lambda_j\big)$ for $ j \in \{1, 2, \dots, M\}$.

Recall that in Equation \ref{eq:tmixture}, $p\big(g_{\lambda}(f(\bx)) \pvert \theta, \lambda, f_X \big)$ is the probability density function of the $t$-distribution $T_{n-p}(m_{\theta, \lambda}, (q_{\theta,\lambda}C_{\theta,\lambda})^{-1})$ and $p(f_X | \theta, \lambda)$ is the likelihood of data given hyperparameters. 

\textbf{Problem Formulation}
We have to efficiently compute the parameters that of the posterior mixture of t-distributions in Equation \ref{eq:tmixture}: \[\{\text{TParameters}^{(-i)}\}_{i=1}^n := \bigg\{m_{\theta_i, \lambda_i}^{(-i)}, q_{\theta_i, \lambda_i}^{(-i)}, C_{\theta_i, \lambda_i}^{(-i)}\bigg\}_{i=1}^n\]
For definitions of these quantities, we refer to the main text. We instead emphasize here that solving for $q^{(-i)}_{\theta, \lambda}$ entails solving perturbed generalized least squares problems and that solving for $m^{(-i)}_{\theta, \lambda}$ and $C^{(-i)}_{\theta, \lambda}$ entail solving perturbed linear systems.

For the likelihood term in Equation \ref{eq:tmixture}, we have 
 \[
 p(f_X|\theta, \lambda) \propto \big|\Sigma_\theta\big|^{-1/2}\big|M_X^T\Sigma_\theta^{-1}M_X\big|^{-1/2}q_{\theta, \lambda}^{(-(n-p)/2)},
 \]
hence we are interesting in computing the following for $i \in \{1, 2, \dots, n\}$: 
 \begin{equation}
 \label{eq: det of sub kernel mat}
 \text{Det}^{(-i)} = \bigg\{\big|\Sigma_\theta^{(-i)}\big|, \big|(M_X^{(-i)})^T\Sigma_\theta^{(-i)}M_X^{(-i)}\big|\bigg\}.
 \end{equation}

The perturbed least squares problems and linear systems can be solved independently in $\mathcal{O}(n^3)$ time, hence a naive LOOCV procedure would take $\mathcal{O}(n^4)$ time. However, using matrix decompositions, we can improve this to $\mathcal{O}(n^3)$ total time.

\textbf{Algorithms} Algorithms \ref{algo: btgloocv} and \ref{algo: btgdet} are used for efficiently computing $\big\{\text{TParameters}^{(-i)}\big\}_{i=1}^{n}$ and $\big\{\text{Det}^{(-i)}\big\}_{i=1}^n$ for fixed hyperparameters $(\theta, \lambda)$. The total time complexity is $\mathcal{O}(n^3)$, because the dominant costs are precomputing a Cholesky factorization for a kernel matrix and repeating $\mathcal{O}(n^2)$ operations across $n$ sub-models.

\begin{algorithm}[ht]
\caption{T-Distributions of Sub-Models \label{algo: btgloocv}}
\begin{algorithmic}[1]\label{Algo: loocv}
\STATE \textbf{Inputs}   ${Y} = g_\lambda \circ {f}_X, {M}_X, {K}_X$, $\bx$ \\ \STATE \textbf{Outputs:} $\{m^{(-i)}_{\theta, \lambda}\}_{i=1}^n$, $\{q^{(-i)}_{\theta, \lambda}\}_{i=1}^n$, $\{C_{\theta, \lambda}^{(-i)}\}_{i=1}^n$
\STATE Pre-compute ${R}, {R}_X$, and $\hat\bx$, where ${R}^T{R} = {K}_X$, ${R}_X^T{R}_X = {M}_X^T{K}_X^{-1}{M}_X$, $\hat\bx = {K}_X^{-1}{Y}$
\FOR{$i = 1 \dots n$}
\STATE $\ell_i = K_X^{-1}e_i/|e_i^TK^{-1}e_i|$
\STATE ${R}_X^{(-i)} \leftarrow \text{Downdate}({R}_X, \ell_i)$ \,\,\,\,\,\,\,\,\,\,(\text{Proposition \ref{prop: downdate for matrices}})
\STATE $r_i \leftarrow {Y}_i / \|{R}^{-T}{e}_i\|_2^2$
\STATE $\hat\bx^{(-i)} \leftarrow \hat\bx - r_i{R}^{-1}({R}^{-T}{e}_i)$ \,\,\,\,\,\,\,\,\,\,\,(\text{Proposition \ref{prop: Abridged LS}})
\STATE ${ \beta}_{\theta, \lambda}^{(-i)} \leftarrow \big({R}_X^{(-i)}\big)^{-1} \big({R}_X^{(-i)}\big)^{-T}{M}_X^{(-i)}\hat\bx^{(-i)}$
\STATE ${r}^{(-i)}\leftarrow {Y}^{(-i)} - {M}_X^{(-i)}{ \beta}_{{ \theta}, \lambda}^{(-i)}$
\STATE $\tilde{q}_{\theta, \lambda}^{(-i)} \leftarrow \left\|{r}^{(-i)}\right\|_{{K}_X^{-1}}^2$ 
\STATE $m_{\lambda,\theta}^{(-i)} \leftarrow  
{K}_{xX}\big({R}_X^{(-i)}\big)^{-1} \big({R}_X^{(-i)}\big)^{-T}{r}^{(-i)} + \big({ \beta}_{\lambda,\theta}^{(-i)}\big)^T{m}( x)$
\STATE ${C}_{\theta, \lambda}^{(-i)} \leftarrow B(\bx^{(-i)})/ [k_\theta(\bx^{(-i)}, \bx^{(-i)})]$
\ENDFOR
\RETURN $\{m^{(-i)}_{\theta, \lambda}\}_{i=1}^n$, $\{q^{(-i)}_{\theta, \lambda}\}_{i=1}^n$, $\{C_{\theta, \lambda}^{(-i)}\}_{i=1}^n$
\end{algorithmic}
\end{algorithm}
 
\textbf{Frozen Hyperparameters}
We remark that our LOOCV algorithm is possible because sparse grids and QMC are \textit{deterministic}---since the underlying sampling grids in hyperparameter-space are frozen---in contrast to Monte Carlo (MC) methods, which are \textit{stochastic}. Since we use fixed sparse grids, and we are in fact interested in evaluating the posterior distribution at fixed hyper-parameters $\{\theta_i,\lambda_i\}_{i=1}^M$. If the sampling grid were not frozen across sub-models, our approach would not be viable, because the sampled points in hyperparameter-space would be different for each sub-model. Likewise, in the MLE approach, hyperparameters $\{\theta_i, \lambda_i\}_{i=1}^M$ should theoretically be retrained on the submodels, hence we cannot re-use computed values.

\begin{algorithm}[ht]
\caption{Fast Determinant Computation \label{algo: btgdet}}
\begin{algorithmic}[1]\label{Algo: loocv}
\STATE \textbf{Inputs} $K_X$
\STATE \textbf{Output} $\{\log |K_X^{(-i)}|\}_{i=1}^n$
\STATE Precompute $R^TR = K_X$
\STATE Precompute $\log(\left|K_X\right|)$
\FOR{$i = 1 \dots n$}
\STATE $b_i = e_i^TK_X^{(-1)}e_i$
\STATE $\log|K_X^{(-i)}| \leftarrow \log(|K_X|) + \log(b_i)$ (Propsition \ref{prop: det of principal minor})
\ENDFOR
\STATE \textbf{return} 
\end{algorithmic}
\end{algorithm}

\subsection{Auxiliary Results}\label{subsections: building blocks}
In this section, we present linear algebra results used in the derivations of Algorithms $1$ and $2$ in $\S$ \ref{subsection:derivation}.

\begin{proposition}[Determinant of a Principal Minor\label{prop: det of principal minor}]
\[\det\big(\Sigma^{(-i)}\big) = \det(\Sigma)\big(e_i^T\Sigma^{-1}e_i\big)\]
\end{proposition}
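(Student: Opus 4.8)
The plan is to recognize $e_i^T \Sigma^{-1} e_i$ as the $(i,i)$ entry of $\Sigma^{-1}$ and then relate that entry to the minor determinant $\det(\Sigma^{(-i)})$ via the classical cofactor (adjugate) formula for the inverse. Since $\Sigma$ is a positive-definite kernel matrix it is invertible, so every quantity below is well defined.

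First I would recall the adjugate identity $\Sigma^{-1} = \frac{1}{\det \Sigma}\,\operatorname{adj}(\Sigma)$, where $\operatorname{adj}(\Sigma)_{ji} = C_{ij}$ and $C_{ij} = (-1)^{i+j} M_{ij}$ is the $(i,j)$ cofactor, with $M_{ij}$ the determinant of the matrix obtained by deleting row $i$ and column $j$. Reading off a single diagonal entry then gives
\[
e_i^T \Sigma^{-1} e_i = (\Sigma^{-1})_{ii} = \frac{C_{ii}}{\det \Sigma}.
\]
Next I would observe that the diagonal cofactor carries the trivial sign $(-1)^{i+i} = 1$, so that $C_{ii} = M_{ii}$, and that $M_{ii}$---the determinant after deleting both row $i$ and column $i$---is exactly $\det(\Sigma^{(-i)})$ by the definition of the $(i,i)$ principal minor. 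Substituting and rearranging yields $\det(\Sigma^{(-i)}) = \det(\Sigma)\,(e_i^T\Sigma^{-1}e_i)$, as claimed.

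There is no deep obstacle here; the only point warranting care is the index/sign bookkeeping, namely confirming that the diagonal cofactor has sign $+1$ and that the ``minor'' appearing in the adjugate formula coincides with the principal submatrix $\Sigma^{(-i)}$. As a sanity check (and a sign-free alternative) I would verify the identity through a symmetric permutation that moves index $i$ to the last coordinate. Such a permutation $P$ satisfies $\det(P\Sigma P^T)=\det(\Sigma)$ and leaves both $\det(\Sigma^{(-i)})$ and $e_i^T\Sigma^{-1}e_i$ invariant, so we may assume $i=n$. Writing the permuted matrix in block form with a scalar trailing block, the block-determinant formula factors $\det(\Sigma)$ as $\det(\Sigma^{(-i)})$ times the Schur complement of that block, while the standard block-inverse formula shows the trailing diagonal entry of $\Sigma^{-1}$ is the reciprocal of the very same Schur complement; eliminating it gives the result directly without any sign tracking.
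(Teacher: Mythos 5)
Your proof is correct. There is, in fact, no paper proof to compare against: the paper states this proposition without proof, both in the main text and in the supplement's auxiliary-results section, treating it as a classical linear-algebra fact (it is the cofactor/adjugate formula read off at a diagonal entry, a special case of Jacobi's identity on minors of the inverse). Your adjugate argument is the standard one-line derivation, $e_i^T\Sigma^{-1}e_i = (\Sigma^{-1})_{ii} = \det\big(\Sigma^{(-i)}\big)/\det(\Sigma)$, and the sign bookkeeping on the diagonal is indeed trivial since $(-1)^{2i}=1$. Your permutation-plus-Schur-complement alternative is also sound, and it is closer in spirit to the machinery the paper actually deploys nearby: the supplement's Lemma on the determinant of the Schur complement of a principal minor is proved via a bordered LDL factorization and invokes exactly this proposition, so your block-matrix derivation effectively supplies the missing foundation in the same style. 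One point worth making explicit if you present the Schur route: you need $\Sigma^{(-i)}$ to be invertible for the block-determinant and block-inverse formulas, which follows here because $\Sigma$ is positive definite (every principal submatrix of a positive definite matrix is positive definite); the adjugate route requires only that $\Sigma$ itself be invertible, so it is marginally more general.
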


\begin{proposition}[Abridged Linear System\label{prop: Abridged LS}]
Let $K\in \mathbb{R}^{n\times n}$ be of full rank, and let ${c}, {y}\in\mathbb{R}^n$ satisfy ${K}{c} = {y}$. Then if $r_i = c_i / {e}_i^{T}{K}^{-1}{e}_i$, we have:
\[{c}^{(-i)} = ({K}^{(-i)})^{-1}{y}^{(-i)} = {c} - r_i{K}^{-1}{e}_i .\]
\end{proposition}

\begin{lemma}[Determinant of the Schur Complement of a Principal Minor\label{lemma: det of schur complement of principal minor}]
If $X\in\mathbb{R}^{n\times m}$ with $m<n$ has full column rank and $\Sigma\in\mathbb{R}^{n\times n}$ is a positive definite matrix, then
\begin{align*}
&\det\left(\big(X^{(-i)}\big)^T\big(\Sigma^{(-i)}\big)^{-1} X^{(-i)}\right) \\&= -\frac{1}{\det\big(\Sigma^{(-i)}\big)} \det \left(\begin{bmatrix} \Sigma & X\\ X^T&O \end{bmatrix}\right)e_i^T \begin{bmatrix} \Sigma & X\\ X^T & O \end{bmatrix}^{-1} e_i\end{align*}
\end{lemma}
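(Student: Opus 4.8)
The plan is to pass through the full bordered (saddle-point) matrix
$A := \begin{bmatrix} \Sigma & X \\ X^T & O \end{bmatrix} \in \mathbb{R}^{(n+m)\times(n+m)}$: first use the Schur complement to convert the target determinant into $\det A^{(-i)}$ and $\det \Sigma^{(-i)}$, and then invoke Proposition~\ref{prop: det of principal minor} to rewrite the principal-minor determinant $\det A^{(-i)}$ as the quadratic form $\det(A)\,e_i^T A^{-1} e_i$. Since $\Sigma$ is positive definite and $X$ has full column rank, $A$ is invertible (a standard KKT/saddle-point system), so Proposition~\ref{prop: det of principal minor} applies to $A$ with $e_i \in \mathbb{R}^{n+m}$.

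First I would record the Schur complement identity for a bordered matrix with invertible leading block. With $\Sigma$ positive definite, $\det A = \det(\Sigma)\det\!\left(-X^T\Sigma^{-1}X\right) = (-1)^m\det(\Sigma)\det\!\left(X^T\Sigma^{-1}X\right)$, where the factor $(-1)^m$ comes from $\det(-N) = (-1)^m \det N$ applied to the $m\times m$ matrix $N = X^T\Sigma^{-1}X$. Applying the identical computation to the system with the $i$-th data point removed gives $\det A^{(-i)} = (-1)^m \det\!\left(\Sigma^{(-i)}\right)\det\!\left((X^{(-i)})^T(\Sigma^{(-i)})^{-1}X^{(-i)}\right)$, where $A^{(-i)} := \begin{bmatrix} \Sigma^{(-i)} & X^{(-i)} \\ (X^{(-i)})^T & O \end{bmatrix}$; this step needs only that $\Sigma^{(-i)}$, a principal submatrix of the positive definite $\Sigma$, is itself invertible.

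The key structural observation, and the step I would be most careful about, is that deleting the $i$-th data point (with $i \le n$) from the bordered system is exactly the $(i,i)$ principal-minor deletion on $A$: it strips row $i$ and column $i$ from the $\Sigma$ block, row $i$ from the top-right $X$ block, and column $i$ from the bottom-left $X^T$ block, while leaving the $O$ block untouched. Hence the matrix $A^{(-i)}$ defined above coincides with the principal minor of $A$ indexed by $e_i\in\mathbb{R}^{n+m}$, and Proposition~\ref{prop: det of principal minor} yields $\det A^{(-i)} = \det(A)\,(e_i^T A^{-1} e_i)$.

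Combining the two displays, I would solve the second Schur identity for the target determinant and substitute the principal-minor expression for $\det A^{(-i)}$, obtaining
\[
\det\!\left((X^{(-i)})^T(\Sigma^{(-i)})^{-1}X^{(-i)}\right) = \frac{(-1)^m}{\det\!\left(\Sigma^{(-i)}\right)}\,\det(A)\,\bigl(e_i^T A^{-1} e_i\bigr),
\]
which is the claimed formula. The main obstacle is purely the sign/parity bookkeeping: the Schur complement produces the factor $(-1)^m$ from $\det(-X^T\Sigma^{-1}X)$, and matching this parity against the sign displayed in the statement (and confirming it is reproduced identically in the reduced system, so that the two parity factors are consistent) is the one delicate point. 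Everything else is an immediate consequence of the two cited results once the indexing of $e_i$ in $\mathbb{R}^{n+m}$ is pinned down.
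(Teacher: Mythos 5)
Your proof is correct and follows essentially the same route as the paper's: both pass to the bordered matrix $W = \begin{bmatrix}\Sigma & X\\ X^T & O\end{bmatrix}$, recognize $\big(X^{(-i)}\big)^T\big(\Sigma^{(-i)}\big)^{-1}X^{(-i)}$ as (up to sign) the Schur complement appearing in the principal minor $W^{(-i)}$, and then apply Proposition~\ref{prop: det of principal minor} to $W$ itself to convert $\det\big(W^{(-i)}\big)$ into $\det(W)\,e_i^TW^{-1}e_i$. Your sign bookkeeping is in fact more careful than the paper's: the correct general factor is $(-1)^m$ as you derive, whereas the lemma's statement displays a bare $-1$ (valid only for odd $m$, e.g.\ the $m=1$ case actually used in the subsequent bilinear-form lemma) and the paper's own concluding line drops the sign altogether.
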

\begin{proof}
Extend the Cholesky factorization $R_{11}^TR_{11}$ of $\Sigma$ to obtain the LDL-decomposition
\[W:=\begin{bmatrix} \Sigma & X\\ X^T & O \end{bmatrix} = \begin{bmatrix} R_{11}^T & 0 \\ R_{12}^T & R_{22}^T \end{bmatrix} \begin{bmatrix} I &0 \\ 0 & -I \end{bmatrix} \begin{bmatrix} R_{11}&R_{12} \\ 0 & R_{22} \end{bmatrix}\]
where $R_{22} = \text{Cholesky}(R_{12}^TR_{12})$ and $R_{12} = R_{11}^{-T}X$. Observe $\big(X^{(-i)}\big)^T\big(\Sigma^{(-i)}\big)^{-1} X^{(-i)}$ is a Schur complement of $W^{(-i)}$. This implies that
\begin{align*}&\det\big(W^{(-i)}\big) \\&= \det\big(\Sigma^{(-i)}\big) \det\left( - \big(X^{(-i)}\big)^T\big(\Sigma^{(-i)}\big)^{-1} X^{(-i)}\right)\end{align*}
By Proposition \ref{prop: det of principal minor}
\begin{align*}
\det\big(W^{(-i)}\big) = \det(W)e_i^TW^{-1}e_i.
\end{align*}
Therefore
\begin{align*}
    &\det\left(\big(X^{(-i)}\big)^T\big(\Sigma^{(-i)}\big)^{-1} X^{(-i)}\right) \\&= \frac{1}{\det(\Sigma^{(-i)})}\det(W)e_i^TW^{-1}e_i.
\end{align*}
\end{proof}

\begin{lemma}[Rank one downdate for bilinear forms\label{lemma: downdate for bilinear forms}]
If $x\in\mathbb{R}^{n}$ and $\Sigma$ is a positive definite matrix in $\mathbb{R}^{n\times n}$, then 
\begin{align*}\big(x^{(-i)}\big)^T \big(\Sigma^{(-i)}\big)^{-1}x^{(-i)} = x^T \left( \Sigma^{-1} - \frac{\Sigma^{-1}e_ie_i^T\Sigma^{-1}}{e_i^T\Sigma^{-1}e_i} \right)x\end{align*}
where $\Sigma^{(-i)}\in\mathbb{R}^{(n-1)\times(n-1)}$ is the $(i, i)th$ principal minor of $\Sigma$ and $x^{(-i)}\in\mathbb{R}^{(n-1)}$ results from deleting the $i$th entry of $x$.
\end{lemma}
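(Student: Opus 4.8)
The plan is to reduce the claim to a single matrix identity, namely that the rank-one correction $\Sigma^{-1} - \Sigma^{-1}e_ie_i^T\Sigma^{-1}/(e_i^T\Sigma^{-1}e_i)$ equals the zero-padded inverse of the principal minor $\Sigma^{(-i)}$, and then contract that identity against $x$ on both sides. First I would use a permutation to move the index $i$ to the last position: the asserted equation is invariant under the simultaneous symmetric permutation of the rows and columns of $\Sigma$ together with the matching relabeling of the entries of $x$ and of $e_i$, so there is no loss of generality in taking $i = n$. I would then write $\Sigma = \begin{bmatrix} A & b \\ b^T & d \end{bmatrix}$ with $A = \Sigma^{(-n)} \in \mathbb{R}^{(n-1)\times(n-1)}$, $b\in\mathbb{R}^{n-1}$, $d\in\mathbb{R}$, and split $x = (u^T, v)^T$ so that $u = x^{(-n)}$ and $v = x_n$. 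The left-hand side is then just $u^T A^{-1} u$, and positive definiteness guarantees $A$ is invertible.

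Next I would substitute the block-inverse (Schur-complement) formula for $\Sigma^{-1}$, introducing $s = d - b^T A^{-1} b > 0$ for the Schur complement. Because $e_n$ selects the last column, one reads off $\Sigma^{-1}e_n = (-s^{-1}(A^{-1}b)^T,\, s^{-1})^T$ and $e_n^T\Sigma^{-1}e_n = s^{-1}$. Forming the rank-one term $\Sigma^{-1}e_ne_n^T\Sigma^{-1}/(e_n^T\Sigma^{-1}e_n)$ and subtracting it from $\Sigma^{-1}$, the off-diagonal blocks and the corner block cancel exactly, and the top-left block collapses from $A^{-1} + s^{-1}A^{-1}bb^TA^{-1}$ down to $A^{-1}$. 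Hence the corrected matrix is precisely $\begin{bmatrix} A^{-1} & 0 \\ 0 & 0 \end{bmatrix}$.

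Finally I would contract this identity with $x$ on each side, giving $x^T \begin{bmatrix} A^{-1} & 0 \\ 0 & 0 \end{bmatrix} x = u^T A^{-1} u = \big(x^{(-n)}\big)^T\big(\Sigma^{(-n)}\big)^{-1}x^{(-n)}$, which is exactly the left-hand side, closing the argument. I would note that the same block computation with $x$ replaced by a rectangular $X$ of full column rank immediately yields the matrix statement of Proposition \ref{prop: downdate for matrices}, so this lemma and the matrix downdate are the single block identity above read in two ways; one could therefore alternatively obtain the lemma as the special case of that proposition with a single column.

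The only delicate step is the bookkeeping in the block-inverse cancellation, where one must track the signs and the $s^{-1}$ factors carefully so that the off-diagonal and corner blocks vanish while the top-left block simplifies. There is no conceptual obstacle here: positive definiteness of $\Sigma$ forces $A$ to be invertible and $s > 0$, so every inverse and every division appearing in the computation is well defined.
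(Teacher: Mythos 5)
Your proof is correct, but it follows a genuinely different route from the paper's. The paper proves this lemma with determinant machinery: since the left-hand side is a scalar, it equals its own determinant, which by Lemma \ref{lemma: det of schur complement of principal minor} (applied with $X=x$) is expressed as a ratio involving $\det(W)$, $e_i^TW^{-1}e_i$, and $\det\big(\Sigma^{(-i)}\big)$, where $W=\begin{bmatrix}\Sigma & x\\ x^T & 0\end{bmatrix}$ is the bordered matrix; these quantities are then evaluated from an LDL-type factorization of $W$ (with $R_{11}=\mathrm{chol}(\Sigma)$, $R_{12}=R_{11}^{-T}x$, $R_{22}=\sqrt{x^T\Sigma^{-1}x}$) together with Proposition \ref{prop: det of principal minor}. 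You bypass all of this: after the (legitimate) symmetric-permutation reduction to $i=n$, the block-inverse formula for $\Sigma=\begin{bmatrix}A & b\\ b^T & d\end{bmatrix}$ with $s=d-b^TA^{-1}b>0$ gives the $x$-free matrix identity
\[
\Sigma^{-1}-\frac{\Sigma^{-1}e_ne_n^T\Sigma^{-1}}{e_n^T\Sigma^{-1}e_n}=\begin{bmatrix}A^{-1} & 0\\ 0 & 0\end{bmatrix},\qquad A=\Sigma^{(-n)},
\]
and contracting against $x$ finishes; the cancellation you flag as delicate does check out, since $\Sigma^{-1}e_n=\big(-s^{-1}(A^{-1}b)^T,\ s^{-1}\big)^T$ and $e_n^T\Sigma^{-1}e_n=s^{-1}$, so the rank-one term has blocks $s^{-1}A^{-1}bb^TA^{-1}$, $-s^{-1}A^{-1}b$, and $s^{-1}$, exactly matching everything in the block inverse except the leading $A^{-1}$. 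Your route is more elementary---no bordered matrix, no determinants---and it proves something strictly stronger: a matrix identity valid simultaneously for all $x$, which, as you observe, also yields Proposition \ref{prop: downdate for matrices} for rectangular $X$ in one stroke, whereas the paper must recover that proposition from this scalar lemma by a polarization argument (applying the identity to $x+y$ and cancelling symmetric terms). What the paper's longer route buys is shared infrastructure: the bordered-matrix determinant lemma it leans on is the same tool one wants for the submodel likelihood determinants $\big|(M_X^{(-i)})^T\big(\Sigma^{(-i)}\big)^{-1}M_X^{(-i)}\big|$ arising in the LOOCV procedure, so proving it once serves two purposes.
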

\begin{proof}
By Lemma \ref{lemma: det of schur complement of principal minor}, we have
\begin{align*}
    \big(x^{(-i)}\big)^T\big(\Sigma^{(-i)}\big)^{-1}x^{(-i)} &= \det\left(\big(x^{(-i)}\big)^T\big(\Sigma^{(-i)}\big)^{-1}x^{(-i)}\right)
     \\&= \frac{1}{\det(\Sigma^{(-i)})}\det(W)e_i^TW^{-1}e_i.
\end{align*}
In this equation,
\[W = \begin{bmatrix} \Sigma & X\\ X^T & O \end{bmatrix} = R^T \begin{bmatrix} I &0 \\ 0 & -1 \end{bmatrix}  R, \quad R = \begin{bmatrix}R_{11}&R_{12}\\O & R_{22}\end{bmatrix},\] where $R_{11} = \text{chol}(\Sigma)$, $R_{12} = R_{11}^{-T}X$, and $R_{22} = \sqrt{x^T\Sigma^{-1}x}$. Using this decomposition, we may compute the term $e_i^TW^{-1}e_i$. 
Since, 
\[R^{-T}e_i = \begin{bmatrix} R_{11}^{-T}e_i && -\dfrac{x^T\Sigma^{-1}e_i}{\sqrt{x^T\Sigma^{-1}x}} \end{bmatrix}^T, \]
we have
\begin{align*}
    e_i^TW^{-1}e_i &= e_i^TR^{-1}R^{-T}e_i - \frac{(e_i^T\Sigma^{-T}x)(x^T\Sigma^{-1}e_i)}{x^T\Sigma^{-1}x} 
     \\&= e_i^T\Sigma^{-1}e_i - \frac{(x^T\Sigma^{-1}e_i)^2}{x^T\Sigma^{-1}x}.
\end{align*}
Lastly, we have
\begin{align*}
    \frac{\det(W)}{\det(\Sigma^{(-i)})} = \frac{-\det(\Sigma)\det(x^T\Sigma^{-1}x)}{\det(\Sigma)e_i^T\Sigma^{-1}e_i} = \frac{x^T\Sigma^{-1}x}{e_i^T\Sigma^{-1}e_i}.
\end{align*}
These together imply that
\begin{align*}
     \big(x^{(-i)}\big)^T\big(\Sigma^{(-i)}\big)^{-1}x^{(-i)} &= x^T\Sigma^{-1}x - \frac{x^T\Sigma^{-1}e_ie_i^T\Sigma^{-1}e_i}{e_i^T\Sigma^{-1}e_i}
\end{align*}
as desired.
\end{proof}

\begin{proposition}[Rank one matrix downdate\label{prop: downdate for matrices}]
If $X\in\mathbb{R}^{n\times m}$ with $m<n$ has full column rank and $\Sigma$ is a positive definite matrix in $\mathbb{R}^{n\times n}$. Let $v_i = \Sigma^{-1}e_i$ then
\begin{align*}\big(X^{(-i)}\big)^T \big(\Sigma^{(-i)}\big)^{-1}X^{(-i)} = X^T \left( \Sigma^{-1} - \frac{v_iv_i^T}{e_i^T\Sigma^{-1}e_i} \right)X,
\end{align*}
where $\Sigma^{(-i)}\in\mathbb{R}^{(n-1)\times(n-1)}$ is the $(i, i)th$ principal minor of $\Sigma$ and $X^{(-i)}\in\mathbb{R}^{(n-1)\times m}$ results from deleting row $i$ from $X$.
\end{proposition}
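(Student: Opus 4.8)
The plan is to reduce the matrix identity to the scalar (quadratic-form) identity already established in Lemma~\ref{lemma: downdate for bilinear forms}, and then promote that scalar identity to a full bilinear statement by polarization. First I would write $X$ in terms of its columns, $X = [x_1,\dots,x_m]$, so that the $(j,k)$ entry of the left-hand side is the bilinear form $\big(x_j^{(-i)}\big)^T\big(\Sigma^{(-i)}\big)^{-1} x_k^{(-i)}$ and the $(j,k)$ entry of the right-hand side is $x_j^T A\, x_k$, where I abbreviate $A := \Sigma^{-1} - v_iv_i^T/(e_i^T\Sigma^{-1}e_i)$ with $v_i = \Sigma^{-1}e_i$. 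Establishing the proposition then amounts to proving the single bilinear identity $\big(x^{(-i)}\big)^T\big(\Sigma^{(-i)}\big)^{-1} y^{(-i)} = x^T A\, y$ for arbitrary $x,y\in\R^n$, and specializing $x=x_j$, $y=x_k$ for each pair of column indices.

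For the bilinear identity, the key observation is that the correction term in $A$ coincides exactly with the one in Lemma~\ref{lemma: downdate for bilinear forms}, since $v_iv_i^T = \Sigma^{-1}e_ie_i^T\Sigma^{-1}$. That lemma already supplies the diagonal case $x = y$, namely $\big(x^{(-i)}\big)^T\big(\Sigma^{(-i)}\big)^{-1} x^{(-i)} = x^T A\, x$. Because both $\big(\Sigma^{(-i)}\big)^{-1}$ and $A$ are symmetric, I would recover the off-diagonal entries by polarization: applying the quadratic identity to $x+y$, to $x$, and to $y$, and combining through $2\,x^T A\, y = (x+y)^T A\,(x+y) - x^T A\, x - y^T A\, y$ together with the analogous expansion on the left-hand side, yields the bilinear identity at once. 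Collecting the $m^2$ resulting scalar equations then assembles the matrix equation claimed in Proposition~\ref{prop: downdate for matrices}.

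I expect the promotion from the quadratic form to the bilinear form (that is, the off-diagonal entries $j \neq k$) to be the only step requiring care, and it is handled entirely by the polarization identity. The two facts that make polarization legitimate are that $\big(\Sigma^{(-i)}\big)^{-1}$ is symmetric (being the inverse of a principal minor of a symmetric positive-definite matrix) and that deletion commutes with vector addition, $(x+y)^{(-i)} = x^{(-i)} + y^{(-i)}$, which is immediate since $(-i)$ is coordinate deletion and hence linear. An equivalent and purely self-contained alternative, which I would mention as a cross-check, is the direct route: verify that $A e_i = 0$ (so that the $i$th row and column of $A$ vanish and the factors $X$ may be replaced by $X^{(-i)}$), and then show via the block inverse formula that the $(-i)$ principal submatrix of $A$ equals $\big(\Sigma^{(-i)}\big)^{-1}$. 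Either way the proof is a bookkeeping reduction to the already-proved scalar case, and no genuine obstacle remains.
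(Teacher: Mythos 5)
Your proposal is correct and takes essentially the same route as the paper's own proof: both reduce the matrix identity to the single bilinear identity $\big(x^{(-i)}\big)^T\big(\Sigma^{(-i)}\big)^{-1}y^{(-i)} = x^T\big(\Sigma^{-1} - \tfrac{\Sigma^{-1}e_ie_i^T\Sigma^{-1}}{e_i^T\Sigma^{-1}e_i}\big)y$ for arbitrary $x,y$, invoke Lemma~\ref{lemma: downdate for bilinear forms} for the quadratic case $x=y$, and then recover the cross terms by polarization (evaluating the quadratic identity at $x+y$ and cancelling the diagonal terms using symmetry). The only cosmetic differences are that you spell out the column-by-column bookkeeping and the polarization identity explicitly, and mention an alternative direct verification that the paper does not pursue.
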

\begin{proof}
Let $\hat{\Sigma}:=\Sigma^{-1} - \dfrac{\Sigma^{-1}e_ie_i^T\Sigma^{-1}}{e_i^T\Sigma^{-1}e_i}$. It suffices to prove that 
\begin{align*}
    (x^{(-i)})^T (\Sigma^{(-i)})^{-1}y^{(-i)} = x^T \hat{\Sigma}y,\quad \forall x, y\in\mathbb{R}^N
\end{align*}
However, this follows from Lemma \ref{lemma: downdate for bilinear forms}, because 
\[\big((x+y)^{(-i)}\big)^T \big(\Sigma^{(-i)}\big)^{-1}(x+y)^{(-i)} = (x+y)^T \hat{\Sigma}(x+y)\]
Expanding and canceling symmetric terms yields
\begin{align*}
    &\big(x^{(-i)}\big)^T \big(\Sigma^{(-i)}\big)^{-1}y^{(-i)}+\big(y^{(-i)}\big)^T \big(\Sigma^{(-i)}\big)^{-1}x^{(-i)} \\&= x^T \hat{\Sigma}y + y^T \hat{\Sigma}x
\end{align*}
implying the result.
\end{proof}

\subsection{Algorithm Derivation}
 \label{subsection:derivation}
Recall the following definitions of elements in $\{\text{TParameters}^{(-i)}\}_{i=1}^n$ from \S \ref{fastloocv}:
\begin{equation}
 q_{\theta,\lambda} = \min_\beta \left\| g_\lambda\big({f}_X\big) - {M}_X{ \beta}\right\|^2_{\big({K}_X\big)^{-1}}
\end{equation}
\begin{equation}
 \hat{\beta}_{\theta,\lambda} = \text{argmin}_\beta \left\| g_\lambda\big({f}_X\big) - {M}_X{ \beta}\right\|^2_{\big({K}_X\big)^{-1}}
\end{equation}
 \begin{equation}
m_{\lambda,\theta} =  
{K}_{\bx X}{K}_{X}^{-1}\big(g_\lambda({f}_X) - {M}_X\hat\beta_{\lambda,\theta}\big) + \hat\beta_{\lambda,\theta}^Tm(\bx)
\end{equation}
 \begin{align}
     C_{\lambda, \theta} = B(\bx) / [k_\theta(\bx, \bx)] \,\,\,(\text{Schur Complement})
 \end{align}
We use the generalized least squares LOOCV subroutine, outlined in Section \ref{ssec:ls} to compute $q^{(-i)}_{\theta, \lambda}$ and $\hat{\beta^{(-i)}_{\theta, \lambda}}$ efficiently for all $i\in\{1, ..., n\}$. We use Proposition \ref{prop: Abridged LS} to efficiently compute $m_{\lambda, \theta}$ and $C_{\lambda, \theta}$ whenever a perturbed linear system arises. Generally, these routines involve precomputing a Cholesky decomposition and using it for back-substitution. These steps are enumerated in Algorithm \ref{algo: btgloocv}.
 
The computation of $\{\text{Det}^{(-i)}\}_{i=1}^n$ is straightforward given Proposition \ref{prop: det of principal minor} and a Cholesky decomposition of the kernel matrix. 
 
 \subsubsection{Generalized Least Squares}
 \label{ssec:ls}

The generalized least squares (GLS) LOOCV problem is that of solving the following set of problems efficiently:
\begin{equation*}
\bigg\{\argmin_{x\in\mathbb{R}^p} \left\|b^{(-i)} -  A^{(-i)}x\right\|_{\Sigma^{(-i)}}^2\bigg\}_{i=1}^n.
\end{equation*}
It is assumed that $\Sigma \in \mathbb{R}^{n\times n}$ is positive definite, $b\in\mathbb{R}^{n}$, $A\in\mathbb{R}^{n\times p}$, $\text{Rank}(A) = \text{Rank}(A^{(-i)}) = p$ for some $p<n$ and for all $i \in \{1, 2, ..., n\}$.

We consider the normal equations for the $i$th subproblem:
\begin{align*}
\argmin_{x\in\mathbb{R}} 
\big(A^{(-i)}x - b^{(-i)}\big)^T
\big(\Sigma^{(-i)}\big)^{-1}
\big(A^{(-i)}x - b^{(-i)}\big),
\end{align*}
namely,
\begin{align}
\label{eq: normal equation}
    \big(A^{(-i)}\big)^T\big(\Sigma^{(-i)}\big)^{-1}A^{(-i)}x = \big(A^{(-i)}\big)^T\big(\Sigma^{(-i)}\big)^{-1}b^{(-i)}.
\end{align}
We first show that Equation \ref{eq: normal equation} has a unique solution. By Proposition \ref{prop: downdate for matrices}, we have
\begin{align*}\big(A^{(-i)}\big)^T\big(\Sigma^{(-i)}\big)^{-1}\big(A^{(-i)}\big) = A^T\Sigma^{-1}A -\frac{v_iv_i^T}{e_i^T\Sigma^{-1}e_i}, \end{align*}
where $v_i = A^T\Sigma^{-1}e_i$. The LHS is a rank-1 downdate applied to $A^T\Sigma^{-1}A$. Moreover, the LHS is positive definite and hence invertible, because by assumption, $\text{Rank}(A^{(-i)})=p$, and $\Sigma^{(-i)}$ is positive definite.

We find the solution to Equation \ref{eq: normal equation} by first computing the Cholesky factorization of the LHS. Specifically, given a Cholesky factorization $R^TR$ of $A^T\Sigma^{-1}A$ from the full problem, the Cholesky factorization of the subproblem can be computed by a $\mathcal{O}(p^2)$ Cholesky downdate $R^{(-i)} = \text{Downdate}(R, v_i/e_i^T\Sigma^{-1}e_i)$ such that
\begin{equation*} 
\big(R^{(-i)}\big)^T R^{(-i)} = \big(A^{(-i)}\big)^T\big(\Sigma^{(-i)}\big)^{-1}A^{(-i)}.
\end{equation*}
We therefore can solve the normal equation \ref{eq: normal equation}
\begin{align*}
    x^{(-i)} = \bigg(\big(A^{(-i)}\big)^T\big(\Sigma^{(-i)}\big)^{-1}A^{(-i)}\bigg)^{-1}\big(A^{(-i)}\big)^Ty^{(-i)},
\end{align*}
where where $y^{(-i)} = (\Sigma^{(-i)})^{-1}b^{(-i)}$. The cost of $\mathcal{O}(n^2)$ is attained by evaluating terms from right to left. We first evaluate $y^{(-i)}$ in $\mathcal{O}(n^2)$ time by making use of Proposition \ref{prop: Abridged LS}. We then perform back-substitution using the cholesky factor $R^{(-i)}$ in $\mathcal{O}(p^2)$ time. The overall time complexity is thus $\mathcal{O}(n^3)$. 
 
\section{Experiment Details}
\textbf{Implementation} We run all experiments using our Julia software package, which supports a variety of models (WGP, CWGP and BTG) and allows for flexible treatment of hyperparameters. We also implement several single and composed transformations. For MLE optimization, we use the L-BFGS algorithm from the Julia \verb|Optim| package \citep{mogensen2018optim}.
 
\textbf{Kernel} We used the RBF kernel for all experiments:
\[
    k_\theta(\bm x, \bm x') =\frac{1}{\tau^2} \exp\Big(-\frac{1}{2}\lVert \bm x - \bm x' \rVert^2_{D^{-2}_\theta}\Big) + \sigma^2 \delta_{\bm x \bm x'}.
\]
\textbf{Model Details} To model observation input noise for BTG, we add a regularization term to make the analytical marginalization of mean and precision tractable. We also assume the constant covariate $m(\bx) = 1_n$ in the BTG model, and normalize observations to the unit interval. We assume the constant mean field for both BTG and WGP.

\subsection{Datasets and setups}
\textbf{Two synthetic datasets}: \textbf{IntSine} and \textbf{SixHumpCamel}. The \textbf{IntSine} dataset, also used by \cite{BWGP}, is sampled from a rounded 1-dimensional sine function with Gaussian noise of a given variance. The training set is comprised of 51 uniformly spaced samples on $[-\pi, \pi]$. The testing set consists of 400 uniformly spaced points on $[-\pi, \pi]$. The \textbf{SixHumpCamel} function is a 2-dimensional benchmark optimization function usually evaluated on $[-3, 3]\times [-2, 2]$ \citep{sixhumpcamel}. We shift its values to be strictly positive. The training set is comprised of $50$ quasi-uniform samples, i.e., a 2-dimensional Sobol sequence, on $[-1, 1]\times [-2, 2]$. The testing set consists of 400 uniformly distributed points on the same domain.

\textbf{Three real datasets}: \textbf{Abalone}, \textbf{WineQuality} and \textbf{Creep}.  \textbf{Abalone} is an 8-dimensional dataset, for which the prediction task is to determine the age of an abalone using eight physical measurements \citep{Dua:2019}. The \textbf{WineQuality} dataset has 12-dimensional explanatory variables and relates the quality of wine to input attributes \citep{wine_dataset}. The \textbf{Creep} dataset is 30-dimensional and relates the creep rupture stress (in MPa) for steel to chemical composition and other features \citep{creep_dataset}. To simulate data-sparse training scenarios, we randomly select training samples of size 30, 200, and 100 from \textbf{Abalone}, \textbf{WineQuality}, and \textbf{Creep}, respectively, and test on 500, 1000 and 1500 out-of-sample points.
\subsection{Performance Metrics}
We use two loss functions to evaluate model performance: root mean squared error (RMSE) and mean absolute error (MAE). Let $\{f^*(\bm{x_i})\}_{i=1}^n$ be true test labels and $\{\hat{f}(\bm{x_i})\}_{i=1}^n$ be predictions, which are taken to be predictive medians in WGP and BTG. The predictive median is also used by \citet{WGP}.
\begin{equation*}
\begin{split}
    \text{RMSE} & = \left( \frac{1}{n}\sum_{i=1}^n (f^*(\bm{x_i}) - \hat{f}(\bm{x_i}))^2  \right)^{\frac{1}{2}}, \\
    \text{MAE} & = \frac{1}{n}\sum_{i=1}^n \big| f^*(\bm{x_i}) - \hat{f}(\bm{x_i})\big|,
\end{split}
\end{equation*}

\begin{table}[!t]
\centering
\caption{Quantile Timing. Time (s) for finding predictive median and credible intervals using no quantile bound, convex hull quantile bound, and singular weight quantile bound. Results are averaged over $10$ trials. \label{table:quantile timing}}
\begin{tabular}{lccc}
\hline
\textbf{} & \multicolumn{1}{c}{Total} & \multicolumn{1}{c}{Median} & \multicolumn{1}{c}{CI} \\ \hline
N/A    & 13.0  & 3.24  &  7.87    \\
Convex Hull  &   6.21   & 1.11  &3.19   \\
Single Weight    & 11.0   & 2.52  & 6.54    \\
\hline
\end{tabular}
\end{table}

\begin{table*}[!t]
\centering
\caption{Regression Timing Results for SixHumpCamel, Abalone and Wine datasets: dimension (Dim) and time cost (min). Models: GP, WGP, CWGP, BTG. Transformations: SA:SinhArcSinh, BC:BoxCox, L:affine. \label{table: regression timing}}
\begin{tabular}{lcccccc}
\hline
          & \multicolumn{2}{c}{\textbf{SixHumpCamel}}     & \multicolumn{2}{c}{\textbf{Abalone}} & \multicolumn{2}{c}{\textbf{WineQuality}}                 \\ \hline
\textbf{} & \multicolumn{1}{c}{Dim} & \multicolumn{1}{c}{Time (min)} & \multicolumn{1}{c}{Dim} & \multicolumn{1}{c}{Time (min)} & \multicolumn{1}{c}{Dim} & \multicolumn{1}{c}{Time (min)} \\ \hline
WGP-BC    & 5  & 0.68  & 10  & 1.28  & 14  & 1.52  \\
WGP-SA    & 6   & 0.78  & 11  & 1.22  & 15  & 1.60  \\
CWGP-L-SA & 8   & 1.08  & 13  & 1.48  & 17  & 2.50  \\
CWGP-A-BC & 9   & 1.14  & 15  & 1.56  & 18  & 2.82  \\
BTG-BC    & 4   & 1.10  & 9   & 1.02  & 13  & 1.40  \\
BTG-SA    & 5   & 0.95  & 10  & 1.04  & 14  & 1.29  \\
BTG-L-SA  & 7   & 1.74  & 12  & 1.11  & 16  & 1.79  \\
BTG-A-BC  & 8   & 1.65  & 14  & 1.09  & 17  & 1.87 \\\hline
\end{tabular}
\end{table*}

\subsection{Scaling Experiments}
\textbf{Sparse Grids vs QMC}
We compare sparse grid and QMC quadrature rules under our quadrature sparsification framework. We use the \textbf{SixHumpCamel} dataset with 30 training data points and 100 testing data points as a toy problem. We train BTG with the composed transformation Affine-SinhArcSinh. The hyperparameter space is $7$-dimensional.

\textbf{Quantile Bounds Speedup}
To assess the effectiveness of quantile bounds, we set up a problem using the \textbf{Levy1D} dataset, $200$ training points, and a QMC grid with $50$ nodes. The ftol for Brent's algorithm is $10^{-3}$. Table \ref{table:quantile timing} shows detailed timing results.

\subsection{Timing Details For Regression Experiments}
We compare the end-to-end inference time of BTG and WGP for a range of datasets. The time cost of BTG depends mostly on the dimension of the integral, i.e., the total number of hyperparameters that we must marginalize out in the fully Bayesian approach. We report timing results for a representative set of regression experiments with dimensions ranging from 4 to 18. 

As is shown in Table \ref{table: regression timing}, the time cost of BTG is slightly larger than WGP in low dimensional problems (dimension less than 10), while in higher dimensional problems (dimension larger than 10) BTG can be even faster than WGP. Generally, we conclude that the end-to-end inference speed of BTG is comparable to WGP.

\end{document}